\newtheorem{theorem}{Theorem}[section]
\newtheorem{corollary}{Corollary}[section]
\newtheorem{lemma}{Lemma}[section]
\newtheorem{proposition}{Proposition}[section]
\newtheorem{remark}{Remark}[section]
\newtheorem{example}{Example}[section]
\newtheorem{definition}{Definition}[section]
\newcommand{\lie}{{\mathfrak{g}}}
\newcommand{\dd}{{\rm d}}
\newcommand{\R}{{\mathbb{R}}}
\newcommand{\C}{{\mathbb{C}}}
\newcommand{\LL}{{\mathbf{L}}}
\newcommand{\tpsi}{{\widetilde{\psi}}}
\newcommand{\GG}{\mathfrak{S}}
\newcommand{\M}{{\mathcal{M}}}
\newcommand{\ZZ}{{\mathcal{Z}}}
\newcommand{\ppr}{{\bf pr}}
\newcommand{\noteNi}[1]{{
$\triangleright$\textcolor{red}{\textbf{Ni}: #1}}
}
\newcommand{\modify}[1]{{\color{red} #1}}
\newcommand{\na}{\nabla}
\title{Path Development Network with Finite-dimensional Lie Group}
\author{\name Hang Lou \email louhang39@gmail.com \\ \addr Department of Mathematics\\
       University College London
       \AND
       \name Siran Li \email sl4025@nyu.edu \\
       \addr Department of Mathematics\\
       Shanghai Jiao Tong University
       \AND
        \name Hao Ni \email h.ni@ucl.ac.uk \\
       \addr Department of Mathematics\\
       University College London}
\begin{document}

\maketitle

\begin{abstract}
Signature, lying at the heart of rough path theory, is a central tool for analysing controlled differential equations driven by irregular paths. Recently it has also found extensive applications in machine learning and data science as a mathematically principled, universal feature that boosts the performance of deep learning-based models in sequential data tasks. It, nevertheless, suffers from the curse of dimensionality when paths are high-dimensional. 

We propose a novel, trainable path development layer, which exploits representations of sequential data through finite-dimensional Lie groups, thus resulting in dimension reduction. Its backpropagation algorithm is designed via optimization on manifolds. Our proposed layer, analogous to recurrent neural networks (RNN), possesses an explicit, simple recurrent unit that alleviates the gradient issues.

Our layer demonstrates its strength in irregular time series modelling. Empirical results on a range of datasets show that the development layer consistently and significantly outperforms signature features on accuracy and dimensionality. The compact hybrid model (stacking one-layer LSTM with the development layer) achieves state-of-the-art against various RNN and continuous time series models. Our layer also enhances the performance of modelling dynamics constrained to Lie groups. Code is available at \url{https://github.com/PDevNet/DevNet.git}.

\end{abstract}

\section{Introduction}
Signature-based methods are an emerging tool for the modelling of time series data. The signature of a path, originated from rough path theory in stochastic analysis (\emph{cf}.  \cite{lyons2007differential, coutin2002stochastic} and the many references cited therein), has shown its promise as an efficient feature representation of time series data, facilitating prediction performance when combined with suitable machine learning models, \emph{e.g.}, deep learning and tree-based models (\cite{xie2017learning,arribas2018signature, morrill2019signature}). 

The \emph{signature} of a path $X: [0, T] \rightarrow \mathbb{R}^{d}$ is the central concept in the theory of rough paths, which aims at providing rigorous mathematical tools for defining and analysing solutions to the controlled differential equations (CDE) driven by oscillatory paths rougher than semimartingales. Such solutions are of very low regularity in general, hence have remained impenetrable via classical analytical tools for CDE. The celebrated theory of regularity structure (\cite{hairer}) underpins the theoretical contribution of the rough path theory in pure mathematics. Roughly speaking, the signature of a path serves as a principled feature that offers a top-down description of the path. Just as the monomial basis of $\mathbb{R}^d$, the signature --- viewed as their noncommutative analogue --- constitutes a basis for the path space. More specifically, the signature of $X$ is defined as an infinite sequence $\left(1, \mathbf{X}_{[0,T]}^{(1)}, \cdots,  \mathbf{X}_{[0,T]}^{(k)}, \cdots\right)$ where, providing that the integrals are well defined,$$\mathbf{X}_{[0, T]}^{k} = \int_{0 < t_{1} < \cdots t_k<  T} \dd X_{t_1} \otimes \cdots \otimes \dd X_{t_k}.$$ We refer the reader to Section~\ref{sec: sig} and Appendix~\ref{appendix_sig} for the precise definition of  signature of paths of bounded variation (BV-paths). See \emph{e.g.},  \cite{lyons2014rough} and the many references cited therein for the general case of paths of finite $p$-variation; $p \geq 1$.


In applications to time series analysis, the signature is a mathematically principled feature representation, in contrast to certain deep learning-based models that are computationally expensive and difficult to interpret. The signature is universal and enjoys desirable analytic-geometric properties (\emph{cf}. \cite{lyons2014rough,levin2013learning}), hence can be used as a deterministic, pluggable layer in neural networks (\cite{kidger2019deep}). Nevertheless, the signature encompasses three major challenges in practice: 
\begin{itemize}
    \item 
It suffers from  the curse of dimensionality ---  dimension  of the truncated signature up to the $k^{\text{th}}$ term, \emph{i.e.}, $\sum_{i =0}^{k} d^{i} = \frac{d^{k+1} - 1}{d - 1}$, grows geometrically in the path dimension $d$.
\item
It is not data-adaptive, and hence appears ineffective in certain learning tasks.
\item
It incurs potential information loss due to finite truncation of the signature feature.
\end{itemize}

The main objective of this paper is to address the above issues. We propose a novel trainable feature representation of time series, termed as the \emph{path development layer}, which is mathematically principled, data-adaptive, and suitable for high-dimensional time series. The theoretical foundation of the path development layer proposed in our paper is rooted in the concept of the \emph{development} of a path (\emph{a.k.a.} Cartan development; see, \emph{e.g.}, \cite{driver1995primer}). It has recently been explored in theoretical studies of rough paths, especially for signature inversion (\cite{lyons2017hyperbolic}) and uniqueness of signature (\cite{boedihardjo2020sl_2,hambly2010uniqueness,chevyrev2016characteristic}).

One way to introduce the development of a path $X$ on $\R^d$ is as follows. 
Consider a matrix Lie group $G$. It shall serve as the range for the development. Then let $M$ be a linear transform from $\R^d$ to $\lie$, the Lie algebra of $G$. The path development of $X$ can be viewed as a generating function of the signature: 
\begin{eqnarray*}
M \longmapsto \sum_{k \geq 0 } M^{\otimes k} \big(\mathbf{X}^{k}_{[0, T]}\big), 
\end{eqnarray*}
where $M^{\otimes k}(v_1 \otimes v_2 \otimes \cdots \otimes v_k) = M(v_1)\cdot M(v_2)\cdots \cdot M(v_k)$ with $v_i \in \mathbb{R}^d$ for $i \in \{1, \cdots, k\}$. The product $\cdot$ is the matrix multiplication. 

An important remark is in order. As we are working with specific matrix Lie groups $G$ and the corresponding Lie algebras $\lie$, it is natural to view both $G$ and $\lie$ as subsets of $\mathfrak{gl}(m;\R)$, the space of $m\times m$ real matrices, \emph{i.e.}, the Lie algebra of the general linear group ${\rm GL}(m;\R)$, which consists of invertible $m\times m$ matrices. This viewpoint significantly simplifies our theories in practice.

\begin{figure*}[t]
\label{fig1}
\centering
  \includegraphics[width=1\linewidth]{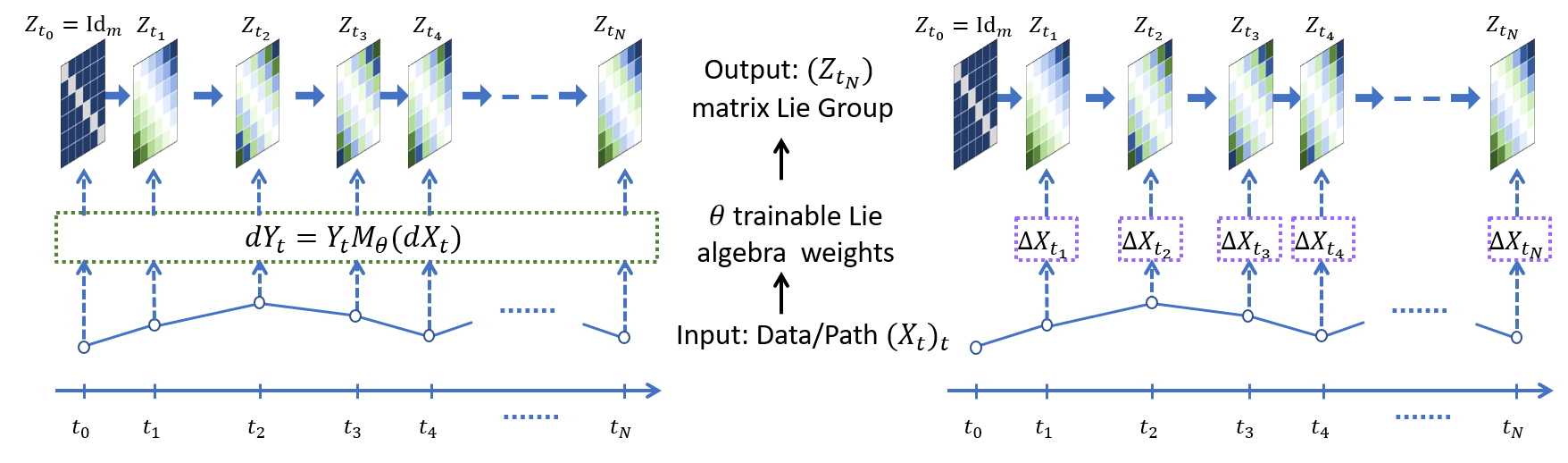}
  \caption{A high-level summary of the proposed development layer. Output and trainable weights take values in the matrix Lie group $G$ and Lie algebra $\mathfrak{g}$, respectively. (Left) It can be interpreted as a solution to the linear controlled differential equation driven by a driving path (\cref{def:dev}), which is a continuous lift of sequential data. (Right) It can be viewed as an analogy of the RNNs, but with a simpler form (Eq.~\eqref{eqn:dev_ref}).}
 \end{figure*}

The preservation of favourable geometric and analytic properties makes the path development a promising feature representation of sequential data. Specifically, non-commutativity of the multiplication in $M^{\otimes k}\left(\mathbf{X}^{k}_{0, T}\right)$ reflects the irreversibility of the order of events. It has been established in  \cite{chevyrev2016characteristic} that, with $\lie$ suitably chosen, the path development constitutes universal and characteristic features. Moreover, in contrast to the infinite dimensionality of the path signature, the development takes values in finite-dimensional Lie groups with dimensions independent of the path dimension $d$.

Motivated by the discussions above, we construct the \emph{path development layer}, which is the central objective of this paper. This layer transforms any sequential data $x = (x_0, \cdots, x_N) \in \mathbb{R}^{d \times (N+1)}$ to the path development $z = (z_{0}, \cdots, z_{N})$ under a trainable linear map $M_{\theta}: \R^d \rightarrow \lie$, where each $z_i$ lies in the Lie group $G$. For each $n \in \{0, \cdots, N\}$ we set
\begin{eqnarray}\label{eqn:dev_ref}
z_{n+1} := z_{n}\exp\big(M_{\theta}(x_{n+1} - x_{n})\big), \quad z_0 = {\rm Id}_{m},
\end{eqnarray}
where $\theta$ is the model parameter, ${\rm Id}_m$ is the identity matrix, and $\exp$ is the matrix exponential. It possesses a recurrence structure analogous to that of RNNs.

To optimise the model parameters of the development layer, we exploit the recurrence structure in Eq.~\eqref{eqn:dev_ref} and the Lie group-valued output to design an efficient gradient-based optimisation method. We combine backpropagation through time of RNNs and ``trivialisation'', an optimisation method on manifolds (\cite{lezcano2019trivializations}). In particular, when $\mathfrak{g}$ is the Lie algebra of the orthogonal group, we can establish boundedness of the gradient. This alleviates the gradient vanishing/exploding problems of backpropagation through time, thus leading to a more stable training process.

To the best of our knowledge, this paper is the first of the kind to
\begin{enumerate}
    \item 
    construct a trainable layer based on the path development; and
    \item
  design, taking into account adequate Lie group structures, the backpropagation of the development layer.  
\end{enumerate}
Key advantages of our proposed path development layer include the following: 
\begin{enumerate}
    \item 
    it provides  mathematically principled features that are characteristic and universal (Theorem \ref{separate_points} and Theorem \ref{thm: universality_of_development}); 
    \item
    it is a data-adaptive, trainable layer pluggable into general neural network architectures;
    \item
     it is applicable to high-dimensional time series;
     \item
      it helps stabilise the training process; and
      \item 
      it can model the dynamics on non-Euclidean spaces by exploiting appropriate Lie group structure.
\end{enumerate}

Numerical results reported in this paper validate the efficacy of the development layer in comparison to the signature layer and several other continuous models. Moreover, the hybrid model obtained by stacking together LSTM with the development layer consistently achieves outstanding performance with more stable training processes and less need for hyper-parameter tuning. We also provide toy examples for simulated Brownian motions on $\mathbb{S}^2$ and $N$-body motions. They serve as evidence for the idea that \emph{equivariance structures} inherent to learning tasks can be effectively incorporated into the development model by properly choosing Lie groups to result in performance boost. This may offer a novel, promising class of models based on development modules for time series on manifolds or trajectories of moving data clouds.

\subsection{Related works}
\textbf{Recurrent Neural Networks (RNNs)} RNNs and their variants are popular models for sequential data, which achieve superior empirical performance on various time series tasks (\cite{hochreiter1997long,cho2014properties,bai2018empirical}). They show excellent capacity for capturing temporal dynamic behaviour, thanks to the recurrence structure of hidden states. However, they are also prone to difficulty in capturing long-term temporal dependency and issues of vanishing/exploding gradients (\cite{bengio1994learning}). Restricting weight matrices of RNNs to the unitary group $U(n)$ may  circumvent such issues, as shown in \cite{arjovsky2016unitary, lezcano2019trivializations, kiani2022projunn}. The development layer proposed in this work possesses recurrence structures similar to those of the RNNs (see Eq.~\eqref{eqn:dev_ref}), but in an explicit and much simpler form. Furthermore, the outputs of our layer are in the suitably chosen Lie groups, which has positive effects on modelling long-term temporal dependency and stabilising the training processes. 

\textbf{Geometric Deep Learning (GDL)}. Recent advances in GDL have gained enormous attention by extending neural networks to handle complex, non-Euclidean (\emph{e.g.}, manifold-valued) data. See \cite{monti2017geometric, bronstein2017geometric,cao2020comprehensive}. A notable example is Riemannian ResNet in \cite{katsman2024riemannian}, which extends the construction of Residual Neural Network
(ResNet) to general Riemannian manifolds. Our proposed development resembles Riemannian ResNet --- both employ the exponential map and exhibit  recurrence structures. However, the recurrence of Riemannian ResNet lies in two consecutive layers, whereas the recurrence occurs between two consecutive times in a single development layer. Also, in contrast Riemannian ResNet, the development layer is designed specifically for times series input. 

\smallskip
\noindent 
\textbf{State space models (SSMs).} State space models, originated from the approximation of linear dynamical systems, serve as a sequence layer that can be stacked for various time series tasks (\cite{rangapuram2018deep}). More recently, the structured state space sequence (``S4'') layers and its simplified version ``S5'' have been introduced by imposing new parameterisation of linear coefficient matrices in SSM and utilising low-rank approximation. The S4 and S5 lead to significant improvement on computational efficiency and state-of-the-art performance on long-range sequence modeling tasks.  See \cite{gu2021efficiently, smith2022simplified}. Analogous to the SSM models, the development layer proposed in our work is also originated from linear differential equations. Nonetheless, the output of our development layer takes values in matrix groups at each time, while the output of SSMs are vector-valued. The geometric structure of the matrix Lie groups prove to be crucial, from both theoretical and algorithmic perspectives, for the development layer proposed in this paper.

\smallskip
\noindent 
\textbf{Continuous time series modelling.} Continuous time series models have attracted increasing attention due to their strengths on treating irregular time series. Popular differential equation-inspired models include Neural ODEs (\cite{haber2017stable,chen2018neural}), Neural SDEs (\cite{liu2019neural}), and Neural CDEs (controlled differential equations; \cite{kidger2020neural, morrill2021neural}). The model based on path signature and development we propose here, similar in spirit to the above, takes a continuous perspective --- it embeds discrete time series to the path space and solves for linear CDEs driven by the path. We shall show its advantages in coping with time series which are irregularly sampled, of variable length, and/or invariant under time reparameterisation. Notably, recent work has also extended Neural ODEs to handle other data types, such as graph data \cite{gravinaanti,eliasof2024feature}. 

\subsection{Organisation of the paper}
The remaining parts of the paper are organised as follows: In \S\ref{sec: prelim} we collect some background materials on path signature, path development, and optimisation on manifolds. In \S\ref{sec: layer} we propose our path development layer. Algorithms for forward/backward pass are described in detail with rigorous mathematical justification. Next, in \S\ref{sec: numerics}, numerical experiments on sequential data imaging and dynamics on manifolds are reported. Brief concluding remarks are given in \S\ref{sec: conclusion}. The four appendices at the end of the paper present various mathematical proofs and experimental details.

\section{Preliminaries}\label{sec: prelim}
\subsection{Path Signature}\label{sec: sig}
We present here a self-contained, brief summary of the path signature, which can be used as a principled, efficient feature of time series data. See \cite{lyons2007differential, levin2013learning,chevyrev2016primer, kidger2019deep} and the references cited therein.

Denote by $\mathcal{V}_{1}\left([0, T], \R^d\right)$ the space of continuous paths on $\R^d$ of finite length. Write $T\left(\left(\R^d\right)\right):=\bigoplus_{k \geq 0} \left(\R^d\right)^{\otimes k}$ for the tensor algebra 
equipped with tensor product and component-wise addition. Signature takes values in $T\left(\left(\R^d\right)\right)$, and its zeroth component is always $1$. 

\begin{definition}[Path Signature]
Let $J \subset [0, T]$ be a compact interval and $X \in \mathcal{V}_1\left([0, T],\R^d\right)$. The signature of $X$ over $J$ is defined as
\begin{equation*}
    S(X)_J = \left(1,\mathbf{X}^1_J,\mathbf{X}^2_J,\cdots\right),
\end{equation*}
where $\mathbf{X}^{k}_{J} = \underset{\underset{u_{1}, \dots, u_{k} \in J}{u_{1} < \dots < u_{k}}} { \int } \dd X_{u_{1}} \otimes \dots \otimes \dd X_{u_{k}}$ for each $k \geq 1$ as Riemann-Stieltjes integrals.
\end{definition}
The truncated signature of $X$ of order $k$ is $S^k(X)_J := \left(1,\mathbf{X}^1_{J},\mathbf{X}^2_J,\cdots,\mathbf{X}^k_J\right)$. Its dimension is $\sum_{i = 0}^{k}d^{i} = \frac{d^{k+1} - 1}{d - 1}$, which grows exponentially in $k$. The signature can be regarded as a non-commutative version of the exponential map defined on the space of paths. Indeed, consider $\exp: \mathbb{R} \rightarrow \mathbb{R}$, $\exp(t)=e^t=\sum_{k=0}^\infty \frac{t^k}{k!}$; it is the unique $C^1$-solution to the linear differential equation $\dd\exp(t) = \exp(t) \dd t$. Analogously, the signature map $t \mapsto S(X)_{0, t}$ solves the following linear differential equation:
\begin{equation}
\dd S(X)_{0, t} = S(X)_{0, t} \otimes \dd X_t, \qquad
S(X)_{0, 0}= \mathbf{1} :=(1,0,0,\ldots).\label{eqn:diff_sig}
\end{equation}
The range of signature of all paths in $\mathcal{V}_{1}\left([0, T], \R^d\right)$ is denoted as $S\left(\mathcal{V}_1\left([0,T], \R^d\right)\right)$.

The signature of a path is a faithful and universal feature representation and enjoys favourable algebraic and analytic properties, \emph{e.g.}, multiplicative property, characteristic property, and time invariance, etc. These properties distinguish the signature as a useful feature set for time series (\emph{cf.} \cite{lyons2007differential, levin2013learning,chevyrev2016primer, kidger2019deep}; see also Appendix~\ref{appendix_sig}). 

\subsection{Path Development on matrix Lie groups}
Let $G$ be a finite-dimensional Lie group with Lie algebra $\mathfrak{g}$. Assume throughout this paper that $\mathfrak{g}$ is a matrix Lie algebra, namely that a Lie subalgebra of $\mathfrak{gl}(m;\mathbb{F})$, such as the following:
\begin{align*}\label{lie_algebra_examples}
    &
    \mathfrak{gl}(m;\mathbb{F}):=\left\{m\times m \text{ matrices over } \mathbb{F}\right\} \cong \mathbb{F}^{m \times m},\\
    & \mathfrak{so}(m,\mathbb{R})=\mathfrak{o}(m,\mathbb{R}):= \left\{A\in \mathbb{R}^{m \times m}:\, A^\top + A=0\right\},\\
    &\mathfrak{sp}(2m,\C):= \left\{A\in \C^{2m \times 2m}:\, A^\top J_m+J_mA=0\right\},\\ &\mathfrak{su}(m,\mathbb{C}) :=\left\{A\in \mathbb{C}^{m \times m}:\, \text{tr}(A) =0,\,A^*+A=0\right\}.
\end{align*}
Here and hereafter, $\mathbb{F} = \R \text{ or } \C$, and $J_m := \left(\begin{matrix} 0 &I_m\\ -I_m &0\end{matrix}\right)$. In addition, for vector spaces $V_1$, $V_2$, we denote by  $\LL(V_1,V_2)$ the space of linear transforms $V_1 \to V_2$. For $ \mathfrak{gl}(m;\mathbb{F})$ (hence its subsets) we always take the Hilbert--Schmidt norm: 
\begin{equation*}
    \|A\| := \sqrt{{\rm tr}(AA^*)} = \sqrt{\sum_{i,j=1}^m \left|A^i_j\right|^2}.
\end{equation*}

{\color{black}
\begin{remark}
The Lie group $Sp(2m,\C)$ corresponding to the algebra of symplectic matrices $\mathfrak{sp}(2m,\C)$ is not compact. In this work we shall sometimes work with 
\begin{align*}
    Sp(2m,\R) := Sp(2m,\C) \cap U(2m,\C),
\end{align*}
known as the compact symplectic group. It is a \emph{compact real form} of  $Sp(2m,\C)$, namely that $Sp(2m,\R)$ is a compact real Lie group whose Lie algebra $\mathfrak{k}$ satisfies $\mathfrak{k}^\C := \mathfrak{k} \otimes_\R \C = \mathfrak{sp}(2m,\C)$.   
\end{remark}
}

\begin{definition}[Path Development]\label{def:dev}
Fix an integer $m \geq 1$. Let $M: \R^d \rightarrow \mathfrak{g} \subset \mathfrak{gl}(m;\mathbb{F})$ be a linear map and let $X \in \mathcal{V}_{1}\left([0,T], \R^d\right)$ be a path. The path development (\emph{a.k.a.} the Cartan development) of $X$ on $G$ under $M$ is the solution to the equation 
\begin{equation}
\dd Z_{t} = Z_{t} \cdot M(\dd X_{t}) \qquad \text{ for all } t \in [0,T] \text{ with } Z_0 = e,\label{eqn: dev_diff_eqn}
\end{equation}
where $e \in G$ is the group identity and $\cdot$ is the matrix multiplication.
\end{definition}

Write $D_{M}(X)$ for the endpoint $Z_T$ of the path development of $X$ under $M$.

\begin{example}\label{exp: dev_linear_path}
For a linear path $X \in \mathcal{V}_{1}\left([0,T], \R^d\right)$, its development  on a matrix Lie group $G$ under $M\in \LL(\R^d, \mathfrak{g})$ is
$$
D_M(X)_{0,t} = \exp\big(M(X_t - X_0)\big).
$$
This is because $t \mapsto \exp(M(X_t - X_0))$ is the unique solution to \eqref{eqn: dev_diff_eqn}.
\end{example}

\begin{lemma}[Multiplicative property of path development]\label{dev_concat}
Let $X \in \mathcal{V}_{1}\left([0, s], \R^d\right)$ and $Y \in \mathcal{V}_{1}\left([s, t], \R^d\right)$. Denote by $X*Y$ their concatenation:  $(X \ast Y)(v) = X(v)$ for $v \in [0,s]$ and $Y(v)-Y(s) +X(s)$ for $v \in [s,t]$. Then $D_M(X*Y)=D_M(X)D_M(Y)$ for all $M\in \LL\left(\R^d,\mathfrak{g}\right)$.
\end{lemma}

By \cref{dev_concat} and \cref{exp: dev_linear_path}, the development of piecewise linear paths can be analytically computed. In the example below, the development takes value in the (isometry group of the) hyperboloid. It is a useful tool for studying uniqueness of signature and expected signature (\cite{hambly2010uniqueness, boedihardjo2021expected,li2022expected}): 
\begin{definition}[Hyperbolic Development]\label{example1} Let $M: \mathbb{R}^2 \rightarrow \frak{so}(1,2)$ be the map
    $M: (x,y) \mapsto \left (\begin{matrix} 0 &0&x\\0&0&y\\ x&y &0\end{matrix}\right),$
where $\frak{so}(1,2)$ is the Lie algebra of the group of orientation-preserving isometries of the hyperbolic plane $\mathbb{H}^2$ in the hyperboloid model: $$\mathbb{H}^2:=\left\{x\in\R^3:\,(x_1)^2+(x_2)^2-(x_3)^2 = -1, \, x_3>0\right\}.$$ Set $p=(0,0,1)^T \in \mathbb{H}^2$. The \emph{hyperbolic development} of $X$ is the path $$t \longmapsto D_M(X)_{0, t} \,p.$$ 
\end{definition}

\begin{figure}[h]
  \centering
  \includegraphics[width=1\linewidth]{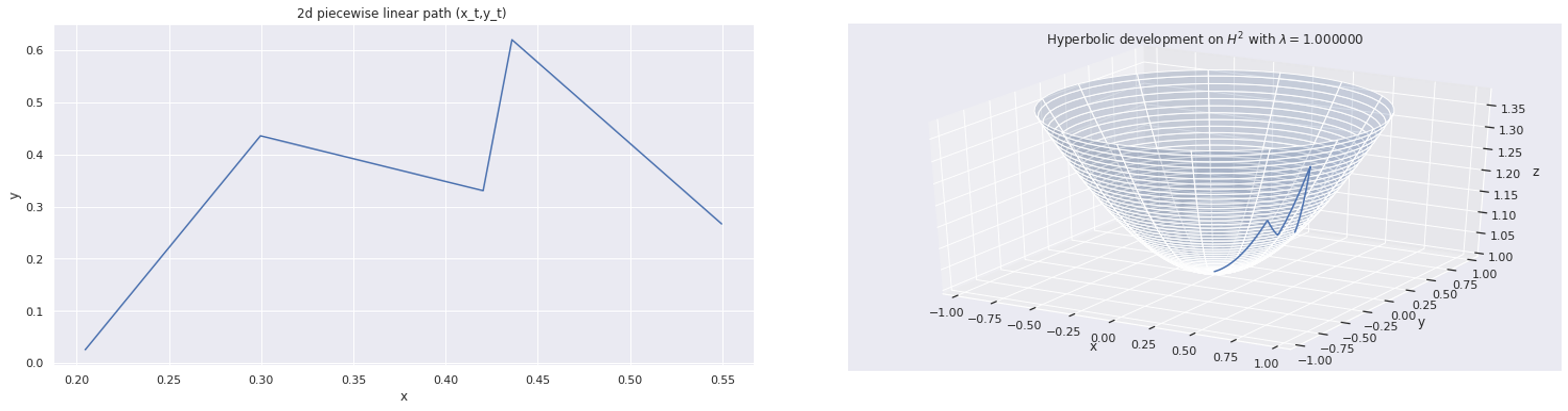}
  \caption{Left panel: A 2-dimensional piecewise linear path $X$; Right panel: the hyperbolic development $D_{M}(X)$.}\label{fig:hyper_example}
\end{figure}
The path development shares several properties with signature (summarised below), which are relevant and useful to applications in machine learning.   Proofs for \cref{Lem:link-sig-dev} and \cref{Lem: time-para-invariance} will be provided in \cref{sec:appendix_dev}. The comparison between signature and development will be given at the end of this subsection.

For $M \in \LL\left(\R^d , \mathfrak{gl}(m; \mathbb{F})\right)$,  we have the canonical extension:  
\begin{eqnarray}\label{canonical extension of M}
\widetilde{M} \in \LL\Big(T\big(\big(\R^d\big)\big), \mathfrak{gl}(m; \mathbb{F})\Big),\quad \widetilde{M}\left(e_{i_1}\otimes e_{i_2} \otimes \cdots \otimes  e_{i_k}\right) := M(e_{i_1}) \cdot M(e_{i_2})\cdot \cdots \cdot M(e_{i_k}).
\end{eqnarray}
Here $\left(e_i\right)_{i=1}^d$ is the standard Cartesian basis for $\R^d$ and $\cdot$  is the matrix multiplication.
\begin{lemma}[Link with signature]\label{Lem:link-sig-dev}
Let $X \in \mathcal{V}_1\left([0, T], \R^d\right)$ be a path and $M \in \LL\left(\R^d, \mathfrak{g}\right)$ be a linear transform. Then $D_{M}(X) = \widetilde{M}(S(X)).$
\end{lemma}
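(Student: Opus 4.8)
The plan is to show that both sides of the claimed identity solve one and the same linear controlled differential equation, and then to invoke uniqueness of the solution. The crucial structural fact I would record first is that the canonical extension $\widetilde{M}$ is an \emph{algebra homomorphism} from $(T((\R^d)),\otimes)$ into $(W,\cdot)$: its defining formula on basis tensors gives $\widetilde{M}(a \otimes b) = \widetilde{M}(a)\,\widetilde{M}(b)$, it restricts to $M$ itself on the degree-one component $\R^d$, and on the degree-zero component it sends the unit to $\mathrm{Id}_m$ by the empty-product convention, so that $\widetilde{M}(\mathbf{1}) = \mathrm{Id}_m = e$. These are exactly the three properties needed to transport the signature equation into the development equation.

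Next I would set $Y_t := \widetilde{M}(S(X)_{0,t})$ and apply $\widetilde{M}$ to the signature equation \eqref{eqn:diff_sig}. Using linearity and the homomorphism property to pull $\widetilde{M}$ through the differential, I obtain
\[
\dd Y_t \;=\; \widetilde{M}\bigl(S(X)_{0,t}\otimes \dd X_t\bigr) \;=\; \widetilde{M}(S(X)_{0,t})\cdot \widetilde{M}(\dd X_t) \;=\; Y_t\cdot M(\dd X_t),
\]
together with the initial value $Y_0 = \widetilde{M}(\mathbf{1}) = e$. This is precisely the development equation \eqref{eqn: dev_diff_eqn}, which is a linear (hence well-posed) controlled differential equation driven by the finite-variation path $X$. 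Since $Z_t := D_M(X)_{0,t}$ solves the same equation with the same initial condition, uniqueness of the solution forces $Y_T = Z_T$, i.e. $\widetilde{M}(S(X)) = D_M(X)$, which is the assertion.

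The hard part will be justifying that $\widetilde{M}$ legitimately commutes with the differential and with the infinite sum defining the signature, since $\widetilde{M}$ is only specified level by level while its target $W=\mathfrak{gl}(m;\mathbb{F})$ is finite-dimensional. Before differentiating, I must check that $\widetilde{M}(S(X)_{0,t})$ converges absolutely in $W$. This follows from the standard factorial decay $\|\mathbf{X}^k_{[0,t]}\| \le L^k/k!$, where $L$ is the $1$-variation (length) of $X$ on $[0,t]$, combined with the operator bound $\|\widetilde{M}(v_1\otimes\cdots\otimes v_k)\| \le \|M\|^k\prod_j\|v_j\|$, which together yield a convergent exponential majorant and so license interchanging $\widetilde{M}$ with both the iterated integral in each $\mathbf{X}^k$ and the limit over truncation levels. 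An equivalent, fully rigorous route that bypasses any formal manipulation of $\widetilde{M}$ on all of $T((\R^d))$ is to expand the unique solution of \eqref{eqn: dev_diff_eqn} as its Picard (iterated-integral) series $\sum_{k\ge 0}\int_{0<t_1<\cdots<t_k<T} M(\dd X_{t_1})\cdots M(\dd X_{t_k})$ and to recognise each summand as $\widetilde{M}$ applied termwise to $\mathbf{X}^k_{[0,T]}$; this matches the development series \eqref{new path dev, 1} degree by degree and sidesteps the interchange issue entirely.
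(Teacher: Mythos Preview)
Your proposal is correct. Your primary argument---pushing the signature ODE \eqref{eqn:diff_sig} forward through the algebra homomorphism $\widetilde{M}$ and invoking uniqueness for the linear CDE \eqref{eqn: dev_diff_eqn}---is a slightly different route from the paper's, which goes the other direction: it starts from the development equation, expands $Z_T$ via Picard iteration as ${\rm Id}+\sum_{n\ge 1}\int M(\dd X_{t_1})\cdots M(\dd X_{t_n})$, and then observes by linearity of $M$ that this series is exactly $\widetilde{M}(S(X))$. Your approach isolates the structural reason (homomorphism + uniqueness) and would generalise cleanly to other target algebras, at the cost of the convergence/interchange justification you flag; the paper's Picard route is more direct and avoids that bookkeeping entirely. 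You in fact describe the paper's argument verbatim in your final paragraph as the ``equivalent, fully rigorous route,'' so you have both proofs in hand.
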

The development of $X$ can be thought of as $M \mapsto \sum_{k \geq 0} \widetilde{M}\Big(\pi_{k}\big(S(X)\big)\Big)$, the ``generating function'' of $S(X)$. Here $\pi_k$ is the projection onto the $k^{\text{th}}$ level of a tensor algebra element.

\begin{lemma}[Invariance under time-reparametrisation]\label{Lem: time-para-invariance}
Let $X \in \mathcal{V}_1\left([0,T],\R^d\right)$ and $\lambda$ be a non-decreasing $\mathcal{C}^1$-diffeomorphism from $[0,T]$ onto $[0, S]$. Define $X_t^\lambda := X_{\lambda_t}$ for $t \in [0, T]$. Then for all $M \in \LL\left(\R^d,\mathfrak{g}\right)$ and  $s,t\in[0,T]$ we have 
$D_{M}(X_{[\lambda_s, \lambda_t]}) = D_{M}\left(X^{\lambda}_{[s,t]}\right),$ where $X_{[s,t]}$ denotes $X$ restricted to the interval $[s,t]$.
\end{lemma}
\begin{remark}
\cref{Lem: time-para-invariance} shows that path development, similar to path signature, is invariant under time re-parameterisation. Thus, the development feature can remove the redundancy caused by the speed of traversing the path, hence bringing about massive dimension reduction and robustness effects to online handwritten character recognition and human action recognition, among other tasks. If, however, the speed information is relevant to the prediction task, one may simply add the time dimension to the input data.
\end{remark}

When choosing adequate Lie algebras, the space of path developments constitutes a rich enough model space to approximate continuous functionals on the signature space. 
\begin{theorem}[Characteristic property of path development, Theorem 4.8 in \cite{chevyrev2016characteristic}] \label{separate_points}
Let $x = (x_0, x_1, \cdots) \in T\left(\left(\R^d\right)\right)$ such that $x_{{\color{red}k}} \neq 0$ for some $k\geq 0$. For any $m \geq \max\{2,k/3\}$, there exists $M \in L\left(\R^d, \mathfrak{sp}(m, \mathbb{C})\right)$ such that $\widetilde{M}(x) \neq 0$.  In particular,
$$
D_{\rm sp}\Big(S\left(\mathcal{V}_1\left([0,T],\R^d\right)\right)\Big)
:=\bigcup_{m=1}^\infty\left\{\widetilde{M}:\, M \in L\left(\R^d, \mathfrak{sp}(m, \mathbb{C})\right) \right\}
$$
separates points over $S\left(\mathcal{V}_1\left([0,T],\R^d\right)\right)$. 
\end{theorem}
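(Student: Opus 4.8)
The plan is to reduce the statement to detecting a single homogeneous level of $x$, and then to construct by hand a linear map $M$ whose induced algebra homomorphism $\widetilde{M}$ is nonzero on that level, all while keeping the target inside a symplectic Lie algebra of controlled size. The ``in particular'' clause will then be a one-line consequence.

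First I would homogenise. Writing $x = \sum_j x^j$ with $x^j \in (\R^d)^{\otimes j}$ and replacing a candidate $M$ by $\lambda M$ for a scalar $\lambda$, one has $\widetilde{\lambda M}(x) = \sum_j \lambda^j \widetilde{M}(x^j)$, a $W$-valued polynomial in $\lambda$. If I can produce $M \in L(\R^d, \mathfrak{sp}(m,\C))$ with $\widetilde{M}(x^k) \neq 0$, then the coefficient of $\lambda^k$ is nonzero, so the polynomial is not identically zero; since $\mathfrak{sp}(m,\C)$ is a complex linear space, $\lambda M$ stays in it, and choosing $\lambda \in \C$ away from the finitely many roots yields $\widetilde{\lambda M}(x) \neq 0$. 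Hence it suffices to treat a single nonzero tensor $x^k \in (\R^d)^{\otimes k}$. Next I would build an explicit transfer-operator representation. Because the rank-one functionals $\phi_1 \otimes \cdots \otimes \phi_k$ span $\bigl((\R^d)^{\otimes k}\bigr)^*$, nonvanishing of $x^k$ supplies functionals $\phi_1, \dots, \phi_k$ with $\langle x^k, \phi_1 \otimes \cdots \otimes \phi_k \rangle \neq 0$. On a chain of vectors $u_0, \dots, u_k$ I set $A_i := M(e_i)$ to be the weighted shift $u_{j-1} \mapsto \phi_{k-j+1}(e_i)\, u_j$ (and $u_k \mapsto 0$), so that every word satisfies $A_{i_1}\cdots A_{i_k}\, u_0 = \bigl(\prod_{j} \phi_j(e_{i_j})\bigr) u_k$ and therefore $\widetilde{M}(x^k)\, u_0 = \langle x^k, \phi_1 \otimes \cdots \otimes \phi_k \rangle\, u_k \neq 0$. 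This already proves nonvanishing inside $\mathfrak{gl}(k+1, \C)$, and the block embedding $A \mapsto \mathrm{diag}(A, -A^\top)$ — which lands in the symplectic algebra and multiplies in the top-left block exactly as $A$ does — upgrades it to the symplectic setting at the cost of doubling the dimension.

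The main obstacle is the sharp size bound $m \geq \max\{2, k/3\}$: the crude construction above only gives $m$ of order $k$. To recover the factor $1/3$ I would avoid doubling and instead route the length-$k$ weighted-shift chain directly through the three available block types of a symplectic element $\left(\begin{smallmatrix} P & Q \\ R & -P^\top \end{smallmatrix}\right)$ with $Q = Q^\top$ and $R = R^\top$, letting the nilpotent chain traverse the lower-left, diagonal, and upper-right blocks so that a single copy of $\C^{2m}$ supports a chain of effective length up to about $3m$. Verifying that the symmetry constraints can be honoured while still installing the prescribed weights $\phi_j(e_i)$, and that the resulting matrix products do not accidentally cancel, is the delicate bookkeeping step; the small-$k$ regime is then absorbed into the floor $m \geq 2$.

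Finally, the separation claim is immediate from the estimate above together with \cref{Lem:link-sig-dev}. If $D_{\rm sp}$ failed to separate two distinct signatures $S(X) \neq S(Y)$, then $x := S(X) - S(Y)$ would be a nonzero element of $T((\R^d))$ whose zeroth component vanishes, so $x^k \neq 0$ for some $k \geq 1$; yet $\widetilde{M}(x) = D_M(X) - D_M(Y) = 0$ for every symplectic $M$ by linearity of $\widetilde{M}$, contradicting the nonvanishing established for the first nonzero level of $x$.
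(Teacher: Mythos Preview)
The paper does not contain its own proof of this theorem: it is quoted verbatim as Theorem~4.8 of \cite{chevyrev2016characteristic} and used as a black box (for instance, inside the proof of \cref{thm: universality_of_development}). So there is no in-paper argument to compare against; your proposal is being measured against the cited source, not this manuscript.

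On the substance: your homogenisation step via $M \mapsto \lambda M$ is correct and standard, and the weighted-shift construction genuinely detects a prescribed rank-one functional on $(\R^d)^{\otimes k}$, giving $\widetilde{M}(x^k)\neq 0$ inside $\mathfrak{gl}(k+1,\C)$. The block embedding $A \mapsto \mathrm{diag}(A,-A^\top)$ into the symplectic algebra is also fine. Your derivation of the ``in particular'' clause from the first part together with \cref{Lem:link-sig-dev} is clean and correct.

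The gap is exactly where you flag it: the quantitative bound $m \geq \max\{2,k/3\}$. Your crude construction yields $m$ of order $2(k+1)$, and the paragraph about ``routing the chain through the three block types'' of a symplectic element is an intuition, not an argument. You have not specified how to place the nilpotent chain so that the symmetry constraints $Q=Q^\top$, $R=R^\top$ are satisfied \emph{and} the weights $\phi_j(e_i)$ can still be installed independently, nor have you ruled out unwanted products arising from the forced symmetric entries. In the Chevyrev--Lyons proof this is precisely the nontrivial content, carried out via an explicit basis calculation in $\mathfrak{sp}$; without it you have proved the separation statement (which only needs \emph{some} finite $m$) but not the dimension estimate in the first sentence of the theorem. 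If the sharp constant matters to you, you should either reproduce that calculation or cite it; if only separation is needed downstream, your argument already suffices for that weaker conclusion.
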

\begin{remark}
\cref{separate_points} shows that if two signatures differ at $k^{\text{th}}$ level, one can find $\widetilde{M}:T\left(\left(\R^d\right)\right) \rightarrow G$ that separates the two signatures, whose resulting development has dimension  $\leq \left(\max(2, \frac{k}{3})\right)^{2}$. This number is typically much smaller than the dimension of the truncated signature up to degree $k$, namely $\frac{d^{k+1}-1}{d-1}$.
\end{remark}

\cite{chevyrev2016characteristic} also proved that the unitary representation of path development is universal to approximate continuous functionals on the signature space. One may refer to \cref{thm: universality_of_development} in the Appendix.

\subsection{Optimisation on Lie Group}
Now we briefly discuss a gradient-based optimisation method on Riemannian manifolds introduced in \citet{lezcano2019trivializations, lezcano2019cheap}. We shall focus on matrix Lie groups; see \citet{lezcano2019trivializations} for general manifolds.

\begin{definition}[Trivialisation]
Let $G$ be a matrix Lie group. A surjection $\phi: \mathbb{R}^{d} \rightarrow G$ is said to be a trivialisation.
\end{definition}
Trivialisations allow us to reduce an optimisation problem constrained on a manifold to an unconstrained problem on a vector space, namely that
\begin{eqnarray*}
\min_{x \in G} f(x) \rightsquigarrow \min_{a \in \mathbb{R}^{d}} f(\phi(a)).
\end{eqnarray*}
The right-hand side can be solved numerically by gradient descent once $\nabla (f \circ \phi)$ is computed.

For a compact connected matrix group $G$ with Lie algebra $\lie$, the matrix exponential $\exp$ serves as the ``canonical'' trivialisation. For the noncompact case where $\exp$ is not surjective, it can still be used as a ``local trivialisation''. See Theorem~4.7 in \cite{lezcano2019trivializations}.

The formulae in \cref{grad_lie_exponential} below enable fast numerical implementation of the gradient $\nabla (f \circ \exp)$ over matrix groups. 

\begin{theorem}\label{grad_lie_exponential} Let $f:\mathfrak{gl}(m;\C) = \mathbb{C}^{m \times m} \rightarrow \mathbb{R}$ be a scalar function, let $\exp$ be the matrix exponential, and let $A$ be any matrix in $\mathfrak{gl}(m;\C)$. We have the following
\begin{itemize}
    \item 
   formula for gradient (\cite{lezcano2019trivializations}):
    \begin{equation*}
\nabla (f \circ \exp)(A) = (\dd  \exp)_{A^\top}\Big(\nabla f\big(\exp(A)\big)\Big);
\end{equation*}
\item
formula for the differential of exponential (\cite{rossmann2001introduction}): 
\begin{equation*}
 (\dd \exp)_{A}(X) = \sum_{k=0}^\infty \frac{\big(-{\rm ad}(A)\big)^k}{(k+1)!}\big(\exp(A) X\big),
\end{equation*}
where ${\rm ad}$ is the adjoint action ${\rm ad}(X)Y:= XY - YX$.
\end{itemize}
\end{theorem}

A variant of the first formula in Theorem~\ref{grad_lie_exponential} is particularly useful for our numerical experiments in \S\ref{sec: numerics}.  Here and hereafter,  denote the \emph{nearest point projection} from $\mathfrak{gl}(m,\C)$ to a given Lie subalgebra $\lie$ as  \begin{align}\label{proj_map}
  {\bf Proj}: \mathfrak{gl}(m,\C) \longrightarrow \lie.
\end{align}
For instance, 
    ${\bf Proj} (M) = \frac{1}{2}\left(M - M^\top\right)$ for $\lie = \mathfrak{so}(m,\mathbb{R})$.
We then have

\begin{lemma}[Gradient on matrix Lie algebra]\label{lemma: Gradient on a matrix Lie algebra}
Let $f: \mathfrak{gl}(m, \mathbb{C}) \rightarrow \mathbb{R}$ be a smooth scalar field and ${\bf Proj}: \mathfrak{gl}(m, \mathbb{C}) \rightarrow \lie$ be the projection as before. Then 
\begin{eqnarray*}
\nabla (f \circ \exp \circ {\bf Proj})(A) = {\bf Proj}\Big\{\dd_{M^\top} \exp\Big(\nabla\big(f \circ \exp(M)\big)\Big)\Big\},
\end{eqnarray*}
where $A\in \mathfrak{gl}(m;\C)$ and  $M = {\bf Proj}(A)$.
Here we use $\exp$ to denote both the exponential maps $\lie \to G$ and $\mathfrak{gl}(m,\C) \to {\rm GL}(m,\C)$.
\end{lemma}

\begin{proof}
For each $A \in \mathfrak{gl}(m,\C)$ it holds that 
\begin{align*}
\nabla_A (f \circ \exp \circ {\mathbf{Proj}}) = {\mathbf{Proj}}\big(\nabla_{M} (f \circ \exp)\big),
\end{align*}
where $M=\mathbf{Proj}(A) \in \lie \subset \mathfrak{gl}(m,\C)$. Moreover, 
\begin{eqnarray*}
\nabla_{M} (f \circ \exp) = \dd_{M^\top} \exp\Big(\nabla\big(f \circ \exp(M)\big)\Big)
\end{eqnarray*}
due to Proposition~6.1 in \cite{lezcano2019trivializations}.
\end{proof}

\section{Path Development Layer}\label{sec: layer}

The main objective of this paper is to propose a novel neural network layer, termed as the \emph{path development layer}. It is a generic, trainable module for time series data.

\subsection{Network architecture} 
In practice, one often observes discrete time series. Let $x = (x_{0}, \cdots, x_{N}) \in \mathbb{R}^{d \times (N+1)}$ be a $d$-dimensional time series. Once the Lie algebra $\mathfrak{g}$ and $M \in \LL(\R^d, \mathfrak{g})$ are specified, one can define the development of $x$ under $M$ via the continuous linear interpolation of $x$. We parameterise the linear map $M \in \LL(\R^d, \mathfrak{g})$ of the development $\mathcal{D}_{M}$ by its linear coefficients $\theta \in \mathfrak{g}^{d}$. That is, given $\theta = (\theta_1, \cdots, \theta_d) \in \mathfrak{g}^{d}$,   define $$M_{\theta}: \R^d \ni x = \left(x^{(1)}, \cdots, x^{(d)}\right) \longmapsto \sum_{j = 1}^{d}  x^{(j)} \theta_j  \in \mathfrak{g}.$$ The  path development layer will be constructed below in a recursive fashion.
\begin{definition}[Path development layer]\label{def:dev_layer} Fix a matrix group $G$ with Lie algebra $\mathfrak{g}$. The path development layer is defined as a map $\mathcal{D}_{\theta}: \mathbb{R}^{d \times (N+1)} \rightarrow G^{N+1}   [\text{or } G, \text{resp.}]:x = (x_0, \cdots, x_N) \mapsto z = (z_0, \cdots, z_N)$ $ [\text{or } z_N, \text{resp.}]$ such that for each $n \in \{0, \cdots, N-1\}$,
\begin{eqnarray}\label{eqn:rec_dev}
z_{n+1} = z_n \exp\big( M_\theta(x_{n+1} - x_{n}) \big), \quad z_0 = {\bf Id}_{m}.
\end{eqnarray}
Here $\exp$ is the matrix exponential, $G^{N+1}$ is the $(N+1)$-fold Cartesian product of $G$, and $\theta \in \mathfrak{g}^{d}$ constitutes trainable model weights. 
\end{definition}

The path development layer is designed to take the form of Eq.~\eqref{eqn:rec_dev} mainly for two reasons: (1), the multiplicative property of development (\cref{dev_concat}); (2) the explicit solution for linear paths (\cref{exp: dev_linear_path}). Notice too that the output of development at step $n$ admits an analytic formula:
$$
z_n = \exp(M_{\theta}(x_1-x_0)) \exp(M_{\theta}(x_2-x_1))\cdots \exp(M_{\theta}(x_{n}-x_{n-1})) \in G.$$ 

\begin{remark}[Connection with RNN]
The recurrence structure of the development in Eq. \eqref{eqn:rec_dev} resembles that of the recurrent neural network (RNN); the development output $z_n$ plays the role as the hidden neuron of RNNs $h_n$. In contrast, our path development layer does not require a fully connected neural network to construct the hidden neurons.

ExpRNN (\cite{lezcano2019trivializations}), a geometric variant of RNN model, also possesses the built-in Lie group structure and restricts the model parameters $\theta$ to the orthogonal Lie group in the recurrence relation $h_{n+1} = \sigma(\theta h_{n} + T x_{n+1})$. The hidden neurons of ExpRNNs, as opposed to the path development layer, may fail to live in the Lie group. 
\end{remark}

\begin{remark}[Comparison with path signature layer] The signature layer of the path maps time series $x = (x_0, \cdots, x_N)$ to $s = (s_0, \cdots, s_N)$ according to the following equation, which is of a similar form to Eq.~\eqref{eqn:rec_dev}:
\begin{eqnarray*}
s_n = s_{n-1} \otimes \exp(x_{n+1} - x_{n}) \text{ for all } n \in \{1, \cdots, N\};\qquad s_0 = \mathbf{1}.
\end{eqnarray*}

In comparison with the deterministic signature layer, the path development has trainable weights. When choosing the matrix Lie algebra $\mathfrak{g} \subset \mathfrak{gl}(m, \mathbb{F})$, the resulting development feature lies in $GL(m,\mathbb{F})$. Note that $\dim_\mathbb{F} GL(m,\mathbb{F}) = m^2$, independently of the path dimension $d$. This is in stark contrast with the signature, whose range (at truncated order $n$) has dimension depending geometrically on $d$.
\end{remark}
The development layer allows for both sequential output $z = (z_i)_{i = 0}^{N} \in G^{N+1}$ and  static output $z_{N} \in G$. Its forward evaluation is summarised in Algorithm \ref{fp} below. 
\begin{algorithm}[H]
    \caption{Forward Pass of Path Development Layer}\label{fp}
    \begin{algorithmic}[1]
        \State {\bfseries Input:}  {$\theta \in \mathfrak{g}^d \subset \mathbb{R}^{m \times m \times d}$ (model parameters), 
        $x = (x_{0},\cdots,x_{N})\in \mathbb{R}^{d \times (N+1) }$     (input time series), 
         $m \in \mathbb{N}$ (order of the matrix Lie algebra),
        $(d, N)$ are the feature and time dimensions of $x$,  respectively.
  }
        \State {$z_{0} \gets {\rm Id}_m$}
        \For {$n \in \{1,\cdots\,N\}$} 
            \State $z_{n} \leftarrow z_{n-1}\exp(M_{\theta} (x_{n} -x_{n-1}))$
        \EndFor
        \State {\bfseries Output:} {$z = (z_{0},\cdots,z_{N})\in G^{N+1} \subset \mathbb{R}^{m \times m \times (N+1)}$ (sequential output)
        or $z_N \in G \subset \mathbb{R}^{m \times m}$ (static output).}
    \end{algorithmic} 
\end{algorithm}

\subsection{Model parameter optimisation}\label{subsec: model_parameter_opt}
We shall utilise a method introduced in  \cite{lezcano2019trivializations, lezcano2019cheap} to optimise model parameters of the development layer, which effectively leverages the Lie group-valued outputs and facilitates efficient gradient computation by exploring the recurrence structure via backpropagation through time.

More specifically, consider a scalar field $\psi: G^{N+1} \rightarrow \mathbb{R}$ and an input $x:=(x_{n})_{n = 0}^{N} \in \left(\R^d\right)^{N+1}$. The goal is to seek optimal parameters $\theta^{*}$ minimising  $\psi(\mathcal{D}_{\theta}(x))$; in formula, 
\begin{eqnarray*}
\theta^{*} = {\rm argmin}_{\theta \in \mathfrak{g}^d}~ \psi(\mathcal{D}_{\theta}(x)),
\end{eqnarray*}
where $\mathcal{D}_{\theta}$ is the development layer in Eq.~\eqref{eqn:rec_dev}.

Due to the Lie group structure of the output, the gradient descent in Euclidean spaces is not directly applicable here. We adapt the method of trivialisation (\cite{lezcano2019trivializations, lezcano2019cheap}) to do gradient computations. Taking into account the recurrence structure of the development, we express the gradients of the development layer in a form similar to the Recurrent Neural network, and thus implement the corresponding backpropagation through time algorithm for parameter optimisation.

To describe the optimisation method on Lie groups in full detail, first let us fix some notations. Recall that  $z:=(z_n)_{n=0}^N\in G^{N+1}$ denotes the output of the development layer $\mathcal{D}_\theta$, whose input is $x=(x_n)_{n=0}^N \in \R^{d \times (N+1)}$. The variables $z_n$ have the recursive structure $z_n = z_{n-1} \cdot \exp(M_{\theta}(\Delta x_n))$; see Eq.~\eqref{eqn:rec_dev}. We introduce the \emph{Step-$i$ update function}:
\begin{equation*}
  \GG_i: G \times \lie^d\longrightarrow G, \qquad (z, \theta) \longmapsto z \exp(M_{\theta}(\Delta x_i)).
\end{equation*}
The output of development layer $(z_i)_{i=0}^{N}$ can be expressed by the update function:
\begin{eqnarray*}
z_i = \GG_i(z_{i-1}, \theta) \text{ for each } i \in \{1, \cdots, N\};  \qquad z_0 = {\bf Id}_{m}.
\end{eqnarray*}
When there is no ambiguity, we simply write $\GG_i(z_{i-1})$ for $\GG_i(z_{i-1}, \theta)$.

Denote by ${\bf pr}_i$ the projection of $G^{N+1}$ onto the $i^{\text{th}}$ coordinate. This notation is selected to distinguish it from $\pi_k$, the $k^{\text{th}}$-level truncation of the signature (Eq.~\eqref{new_truncation}), or the projection ${\bf Proj}$ from square matrices to $\lie$ (Eq.~\eqref{proj_map}). For a scalar field $\psi : G^{N+1} \to \R$, the correct definition for its ``$i^{\text{th}}$ partial derivative'' is $\left({\bf pr}_i\right)_\# \dd \psi$, the pushforward of $\dd \psi$ via ${\bf pr}_i$:
\begin{equation*}
\left({\bf pr}_i\right)_\# \dd \psi\Big|_z: T_{{\bf pr}_i(z)}G \to T_{\psi(z)}\R\cong\R\qquad \text{for each } z \in G^{N+1}.
\end{equation*}
For each $n \in \{0,1,\ldots,N-1\}$ we also introduce the function $\tpsi_n:G \to \R$ as follows:
\begin{equation}\label{new-update, psi tilde}
    \tpsi_n (z_n):= \psi\Big( z_0,\, \ldots,\, z_n,\, \GG_{n+1}(z_n),\, \GG_{n+2}\circ \GG_{n+1}(z_n),\,\ldots,\,\GG_{N} \circ \cdots \circ\GG_{n+1}(z_n) \Big).
\end{equation}

Recall from \eqref{eqn:rec_dev} that $z_n$, the output of development layer, is defined recursively with respect to $n$. As a consequence, the exterior differential of $\tpsi_n$ also has a recursive structure.

\begin{theorem}[Recursive structure of the differential at $z_n$]\label{theorem_recurrence of differential} Let $x= (x_{0}, \cdots, x_N) \in \left(\mathbb{R}^d\right)^{ N+1}$ and $z = (z_{0}, \cdots, z_N) \in G^{N+1}$ be the input and output of the development layer as before, where $G$ is a matrix Lie group. Let $\psi: G^{N+1} \to \R$ be a smooth scalar field and  $\tpsi_n$ be as in Eq.~\eqref{new-update, psi tilde}. For each $n \in \{1, \cdots, N\}$ we can express the 1-form $\dd\tpsi_n$ on $G$ as follows:
\begin{equation*}
   \dd_{z_n}\tpsi_n (\xi\cdot z_n) = \left({\bf pr}_n\right)_\#\dd\psi\Big|_{z}(z_n\cdot\xi) + \dd_{z_{n+1}}\tpsi_{n+1}\Big(\xi \cdot z_{n+1} \Big)\quad\text{ for each $\xi \in \mathfrak{g}$.}
\end{equation*}
Here $\Delta x_{n+1}:=x_{n+1}-x_{n}$ and $\cdot$ is the matrix multiplication.
\end{theorem}

In the above $\dd\tpsi_n: T_{z_n}G \to T_{\tpsi_n(z_n)}\R \cong\R$. A generic element of the domain $T_{z_n}G$ takes the form $z_n \cdot \xi \equiv \dd\mathcal{L}_{z_n} (\xi)$ for $\xi \in \lie = T_{{\rm Id}}G$, where $\mathcal{L}_{z_n}: G \to G$ is the left multiplication. One may view $\dd\tpsi_n$ as a 1-form on $G$ or, equivalently, as an element of $\lie^*$.

By virtue of Theorem~\ref{theorem_recurrence of differential} and 
the duality between gradient and differential, we are able to determine the gradient of $\tpsi_n$, which is the main ingredient of the Riemannian gradient descent algorithm of the development layer (\emph{i.e.}, Algorithm~\ref{bp_algo} below). Before further developments, let us first comment on the Riemannian gradient on matrix Lie groups.

\begin{remark}\label{remark: gradient}
Given a scalar field $f: \left(\mathcal{M}, g\right) \to \R$, recall that its gradient $\na f$ is the vector field on $\mathcal{M}$ determined by $$g(\na f, V)=\dd f(V)$$ for any vector field $V$. 
\end{remark}

Throughout this paper, when $\mathcal{M} = \mathfrak{gl}(m,\mathbb{F})$ for $\mathbb{F}=\R$ or $\C$, we always take the Riemannian metric given by the Hilbert--Schmidt norm of matrices. 

\begin{proposition}[Gradient with respect to model parameter $\theta$]\label{propn: gradient computation}
    Let $x \in \left(\R^d\right)^{N+1}$, $z \in G^{N+1}$, and $\theta \in \lie^d$ be the input, output, and model parameter of the development layer $\mathcal{D}_\theta:\left(\R^d\right)^{N+1}\to G^{N+1}$, respectively. The gradient of $\theta \mapsto \psi\circ \mathcal{D}_\theta(x)$ for any scalar field $\psi: G^{N+1} \to \R$ is determined by the expression
    \begin{equation*}
\na \big(\psi \circ \mathcal{D}_\theta(x)\big) =  \sum_{n=1}^N \na_\theta\Big(\tpsi_n \circ \mathfrak{S}_n(z_{n-1},\theta)\Big),
    \end{equation*}
where $\mathfrak{S}_n$ is the Step-$n$ update function and $\tpsi_n$ is the updated version of as in Eq.~\eqref{new-update, psi tilde}. Furthermore, the right-hand side can be computed via
\begin{align*}
\na_\theta\Big(\tpsi_n \circ \mathfrak{S}_n(z_{n-1},\theta)\Big) = {\bf Proj}\bigg(\dd_{{\left[M_\theta(\Delta x_n)\right]}^\top} \exp\left\{ \left[\na_{z_n}\tpsi_n\right]\big(M_\theta(\Delta x_n)\big) \cdot z_{n-1} \right\}\bigg). 
\end{align*}
\end{proposition}

With the gradient computation at hand (in particular, \cref{theorem_recurrence of differential} and \cref{propn: gradient computation}), we are now ready to describe the backpropagation of development layer in Algorithm~\ref{bp_algo}.

\begin{algorithm}[ht]
    \caption{Backward Pass of Path Development Layer}\label{bp_algo}
   \begin{algorithmic}[1]
   \State {\bfseries Input:}   $x = (x_{0},\cdots,x_{N})\in \mathbb{R}^{(N+1) \times d}$ (input time series), $z = (z_{0},\cdots,z_{N})\in G^{N+1}$ (output series by the forward pass), $\theta = (\theta_1, \cdots, \theta_d) \in \mathfrak{g}^{d}\subset \mathbb{R}^{m \times m \times d}$ (model parameters), 
      $\eta \in \mathbb{R}$ (learning rate),  ${\psi}: G^{N+1}\rightarrow \mathbb{R}$ (loss function).
        \State {Initialize $ a \leftarrow  0 $} \hfill \Comment{ $a$ represents $\dd_{z_n}\tpsi_n$}
        \State  Initialize $\omega   \leftarrow  0 $ \hfill \Comment{$\omega$ represents $\nabla_{\theta} \left(\psi\circ \mathcal{D}_\theta(x)\right)$}
         \For {$n \in \{N,\cdots ,1\}$}
         \State Compute $\mathfrak{m} \leftarrow  \exp\big(M_\theta(\Delta x_n)\big)$
         \State Compute $a \leftarrow  (\mathbf{pr_n})_\# \dd\psi\big|_{y = z} + a \cdot \mathfrak{m}$ \Comment{by \cref{theorem_recurrence of differential}.} 
        \State Compute $\omega \leftarrow  \omega + \dd_{\mathfrak{m}^\top}  \exp (a)\otimes \Delta x_n $ \Comment{by
        \cref{propn: gradient computation} and \cref{grad_lie_exponential}.}
        \EndFor
 \State{$\theta \gets \theta - \eta \cdot \omega$.}
         \For{$i \in \{1, \cdots, d\}$}
         
 $ \theta_i \gets {\mathbf{Proj}}\left(\theta_i\right)  \in \mathfrak{g}$. \Comment{by \cref{proj_map}.}
        \EndFor
\State{\bfseries Output:}  return $\theta$ (updated model parameter).
    \end{algorithmic} 
\end{algorithm}

One crucial remark is in order here. In view of \cref{theorem_recurrence of differential} and \cref{propn: gradient computation}, the development layer proposed in this paper possesses a recurrence structure analogous to that of the RNNs. This is the key structural feature of \cref{bp_algo}. However, it is well known that the RNNs are, in general, prone to problems of vanishing and/or exploding gradients  (see \cite{bengio1994learning}). We emphasise that \emph{when $G$ is the orthogonal or unitary group}, the gradient issues are naturally alleviated for the development layer.

\begin{remark}\label{remark:dev_bound}
The differential of the update function is an isometry when $G=SO(m)$ or $U(m)$. That is, when the matrix group $G$ is equipped with the Hilbert--Schmidt norm, $\dd\mathfrak{S}_i: TG \to TG$ has operator norm $1$ for any $i \in \{0,1,\ldots,N-1\}$ and $z\in G^{N+1}$. (Here $\dd\mathfrak{S}_i$ is understood as $\dd\mathfrak{S}_i(\bullet,\theta_{\modify{i}})$ for fixed $\theta_{\modify{i}} \in \lie^{\modify{d}}$; we suppress the dependence on $\theta_{i} $ for simplicity.) This is because $$\dd_{z_i}\mathfrak{S}_i \big((\mathcal{L}_{z_i})_\#\eta\big) = {z_i}\cdot\eta\cdot \exp\big(M_{\theta_{i+1}}\Delta x_{i+1}\big) \qquad\text{for any } \eta \in \lie$$ 
and, on the right-hand side, both the left multiplication by $z_i$ and the right multiplication by $\exp\big(M_{\theta_{i+1}}\Delta x_{i+1}\big) \in G$ are isometries for $G=O(m)$ or $U(m)$. More concretely, $$\left\|\dd_{z_i}\mathfrak{S}_i \big((\mathcal{L}_{z_i})_\#\eta\big)\right\| = \|\eta\|\qquad\text{for any } \eta \in \lie.$$
\end{remark}

The implementation of the development layer can be flexibly adapted to general matrix Lie algebras via {\bf Proj} in Eq.~\eqref{proj_map}. 
The corresponding Lie groups used in our implementation include the special orthogonal, unitary, real symplectic, and special Euclidean  groups, as well as the group of orientation-preserving isometries of the hyperbolic space. 

We adopt \url{https://github.com/Lezcano/expRNN} for the PyTorch implementation of matrix exponential and its differential as in \cite{al2009computing,al2010new} to efficiently evaluate and train the development layer. A scaling-and-square trick for matrix exponential computation (\cite{higham2005scaling}) is used to facilitate  stabilisation of model optimisation. To sum up, the computation of Algorithms~\ref{fp} and \ref{bp_algo} requires $\mathcal{O}(N)$ time and $\mathcal{O}(N)$ storage complexity ($N$ = the length of time series). The complexity of Algorithm~\ref{fp} may be optimised by pre-computing the Lie group-valued path increments in parallel and then computing the path development output iteratively.

\section{Numerical experiments}\label{sec: numerics}
We begin this section with validating the model performance of the proposed development layer (DEV) and the hybrid model, constructed by stacking LSTM with the development layer (LSTM+DEV),  for general sequential data tasks in \cref{seq_model}. We then proceed with two examples of simulated data to articulate its capability of modelling trajectories on non-Euclidean spaces in Section \ref{dynamics}, which may fail to be accomplished by RNN or its existing geometric variants (\emph{e.g.}, ExpRNN). 

In the subsequent experiments, we consider the development with special orthogonal, real symplectic, and special Euclidean groups, denoted as DEV(SO), DEV(Sp), and DEV(SE) respectively. The standard loss functions are chosen: cross entropy for the time series classification tasks in \cref{seq_model} and mean squared error (MSE) for the regression tasks in \cref{dynamics}. Full implementation details of the experiments can be found in \cref{sec: Experiement_details}. Additionally, we have included the relevant codes in the supplementary material to ensure reproducibility.


\subsection{Time series and sequential image modelling}\label{seq_model}

The path development network with suitably chosen matrix Lie group has a broad range of capabilities (\emph{e.g.}, capturing long-term dependencies, handling irregular time series, improving training stability and convergence rate) for a wide range of applications (e.g. character trajectories, audio, images). This is demonstrated using the following datasets. (1) Character Trajectories (\cite{bagnall2018uea}). (2) Speech Commands dataset (\cite{warden2018speech}). (3)  Sequential MNIST(\cite{le2015simple}), permuted sequential MNIST (\cite{le2015simple}), and sequential CIFAR-10 (\cite{chang2017dilated}).

We benchmark the proposed development models with various models, including RNN-based and/or continuous time series models (\emph{e.g.}, LSTM, NCDE, GRU-ODE) as baselines and state-of-the-art (SOTA) models for each task.

\subsubsection{Speech Command dataset}
We first demonstrate the performance of the path development network on the Speech Commands dataset (\cite{warden2018speech}) as an example of high-dimensional long time series. We follow \cite{kidger2020neural} to precompute mel-frequency cepstrum coefficients of the audio recording and consider a balanced classification task of predicting a spoken word. This processed dataset consists of 34975 samples of secondly audio recordings, represented in a regularly spaced time series of length 169 and path dimension 20.

\paragraph{Path development as a universal representation with dimension reduction.}


\begin{table}[ht]
\caption{Test accuracies (\%) of the linear models using multiple orthogonal path developments on the Speech Commands dataset \emph{w.r.t.} the different matrix sizes $m$ and number of path development layers $N$.}
\label{SC_linear_dev}
\centering
\resizebox{0.35\columnwidth}{!}{%
\begin{tabular}{lcccc}
\toprule
Matrix size & \multicolumn{4}{c}{Number of developments }  \\
\cmidrule{1-5}
 & 1 & 2 & 4 & 8 \\ \midrule
5 & 70.0&77.3 &81.5& 83.4 \\
10   & 81.4& 83.8& 84.3& 85.2\\
20     & 84.4& 86.0& \textbf{86.5}& 85.6 \\ \bottomrule
\end{tabular}}
\end{table}

We apply linear models on both path signature and path development to Speech Command and Character trajectories datasets. Both features achieve high performance on their own, thus validating universality (see Theorem~\ref{thm: universality_of_development}). As shown in Table \ref{SC_linear_dev}, increasing the number of developments or the matrix size leads to an improvement in classification accuracy in general. The only exception is that for matrix size 20, the accuracy is reduced by 0.9\% if we increase the number of developments from 4 to 8, which may result from overfitting. In Figure \ref{fig:SC_data_sig_dev_comp}, the curve of test accuracy against feature dimension of the single development (Red) is consistently above that of the signature (Blue), demonstrating that development is a more compact feature than signature. The development layer with the matrix group of order 30 achieves an accuracy about 86\% of that of the signature, but with dimension reduced by a factor of 9 (900 \emph{v.s.} 8420).
\begin{figure}[!ht]
\centering
\begin{sc}
\resizebox{0.45\columnwidth}{!}{
\includegraphics[width=\columnwidth]{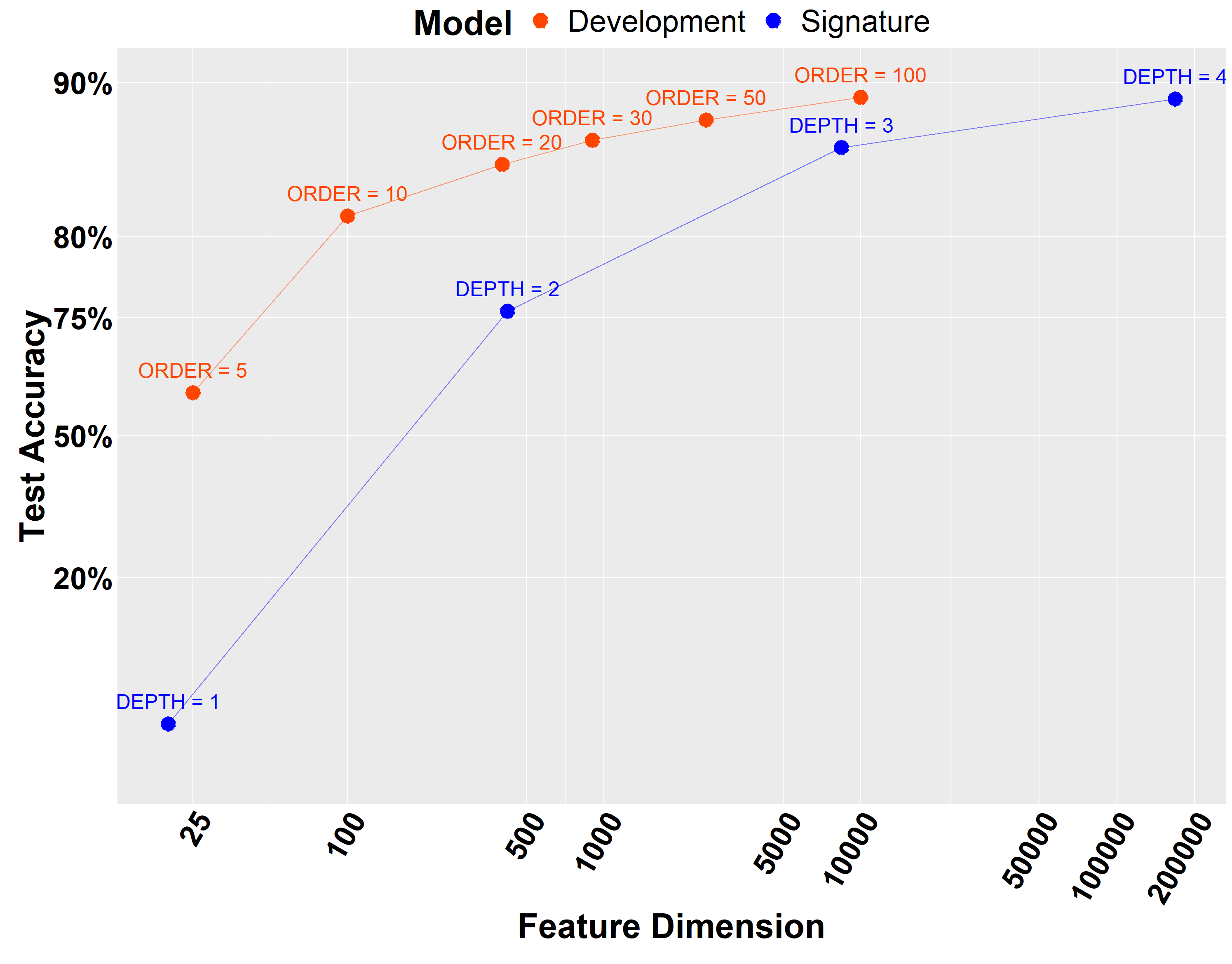}}
 \end{sc}
\vskip 0.1in
\caption{Test accuracy v.s. the feature dimension curves of the linear models using the development and signature representation on Speech Commands dataset. See full results in \cref{Tab:SC_sig_dev_comp}. }\label{fig:SC_data_sig_dev_comp}
\end{figure}
\paragraph{State-of-the-art performance.} We first compare the proposed development model with other continuous time series model baselines, including GRU-ODE (\cite{de2019gru}), ODE-RNN (\cite{rubanova2019latent}), and NCDE (\cite{kidger2020neural}). Our model significantly outperforms GRU-ODE and ODE-RNN on this task, whilst achieve comparable test accuracy with NCDE. The average accuracy of NCDE is 1.9\% higher than our development model. However, the standard deviation of the NCDE model is 2.5\%, higher than that of the development model (0.1\%), thus making the accuracy difference of these two models insignificant. Note that NCDEs fully parameterize the vector field of the controlled differential equations (CDEs) by a neural network, while our proposed development satisfies \emph{linear} CDEs with trainable weights constrained in the matrix Lie algebra. The difference in complexities of the relevant CDEs may contribute to the higher average test accuracy and higher variation of the prediction by NCDE.

Compared with the strong baseline LSTM, development layer alone has a lower accuracy by 7.8\%. However, our hybrid model (LSTM+DEV) further improves the LSTM performance by 0.9\%.
The LSTM+DEV model achieves $96.8\%$ test accuracy, which is slightly lower than that of the state-of-the-art FLEXTCN (\cite{romero2021flexconv}). But, involving only a quarter of parameters of the latter, our model is significantly more compact.

\begin{table}[h]
\caption{Test accuracies (\%) (mean ± std, computed across 5 runs) on Speech Commands dataset}
\label{SC_tab2}

\centering
\resizebox{0.8\columnwidth}{!}{
\begin{tabular}{lcc}
\toprule
Model & Test accuracy(\%) &  \# Params(K) \\
\midrule
ODE-RNN \cite{rubanova2019latent}& 65.9 $\pm35.6$&  89\\
NCDE\cite{kidger2020neural} & 89.8 $\pm 2.5$ &   89\\

\midrule
LSTM    & 95.7 $\pm0.2$  & 88\\
EXPRNN \cite{lezcano2019cheap}& 82.1   & 270\\
LipschitzRNN \cite{erichson2020lipschitz}& 88.38 &270\\
\midrule

CKConv \cite{romero2021ckconv} & 95.3     & 100               \\ 
FlexTCN\cite{romero2021flexconv} & \textbf{97.7}  &  373\\
S4 \cite{gu2021efficiently}&  95.3 \% & 260              \\
LSSL\cite{gu2021combining}& 93.58  &   330 \\

\midrule

Signature      & 85.7$\pm0.1$  &84\\
DEV(SO)      & 87.9$\pm$0.1    &   87  \\
LSTM+DEV(SO)      & 96.8$\pm$0.1   &86    \\  
\bottomrule
\end{tabular}}
\end{table}

\subsubsection{Character Trajectories data}
Next we investigate the performance of the development network on \emph{irregular} time series in terms of accuracy, robustness and training stability. We consider the Character Trajectories from the UEA time series classification archive (\cite{bagnall2018uea}). This dataset consists of Latin alphabet characters written in a single stroke with their 2-dimensional positions and pen tip force. There are 2858 time series with a constant length of 182 and 20 unique characters to classify. Following the approach in \cite{kidger2020neural}, we randomly drop 30\%, 50\%, or 70\% of the data in the same way for every model and every repeat, which results in irregular sampled time series. In addition, we validate the robustness of model to timescale change following \cite{gu2020hippo}.
\paragraph{State-of-the-art accuracy.}
Similar to Speech Commands dataset, the development layer itself outperforms the signature layer, although still underperforms NCDE and LSTM in terms of test accuracy. However, the proposed hybrid model (LSTM+DEV) with special orthogonal  group achieves the state-of-the-art accuracy, with around 0.6-0.7\% and  0-1.1\% performance gain in comparison to the best NCDE and CKCNN models, respectively. 
\begin{table}[h!]
\begin{minipage}[h]{0.99\textwidth}
\caption{Test accuracies(\%) on the Character Trajectories dataset for different data dropping rate and time scale distribution shifts. $*$ means taken from our reproduced baseline models. }\label{CT_tab}
\begin{sc}
\resizebox{0.99\columnwidth}{!}{
\begin{tabular}{lcccccc}
\toprule
Model & \multicolumn{6}{c}{Test accuracy(\%)}  \\
\cmidrule{1-7}
Task & \multicolumn{3}{c}{Drop rate}&\multicolumn{3}{c}{Sampling rate}\\
\midrule
 & 30\%  & 50\%  & 70\% &1$\rightarrow$ 1& 1/2 $\rightarrow$ 1 & 1 $\rightarrow$ 1/2  \\ 
\midrule
GRU-ODE \cite{de2019gru}     &92.6 $\pm 1.6$& 87.6 $\pm3.9$& 89.9 $\pm 3.7$& 96.2& 23.1 & 25.5 \\
NCDE \cite{kidger2020neural} &98.7 $\pm 0.8$& 98.8 $\pm 0.2$& 98.6 $\pm 0.4$ & 98.8& 44.7 & 11.3 \\
\midrule
LSTM    & 94.0 $\pm$ 3.9 & 94.6 $\pm$ 2.0&87.8 $\pm$ 6.3 & 91.3 & 31.9 & 28.2  \\
ExpRNN \cite{lezcano2019cheap} &95.8 $\pm$ 0.7  &96.6 $\pm$ 0.9  &  95.6$\pm$0.7& 97.2$*$ &17.3$*$&11.0$*$    \\
\midrule
CKConv \cite{romero2021ckconv}      & \textbf{99.3} & 98.8&98.1&-&-&-\\
HIPPO \cite{gu2020hippo} & - & - & - & - &88.8&90.1\\

\midrule
Signature   &92.1 $\pm 0.2$ &92.3 $\pm$ 0.5&90.2$\pm$0.4&92.9&92.3&91.5\\
DEV (SO)      & 92.7$\pm$ 0.6 &92.6$\pm$0.9 &  91.9 $\pm$0.9 &93.2&91.6&\textbf{92.3}\\
LSTM+DEV (SO)      & \textbf{99.3$\pm$ 0.3}&\textbf{99.3 $\pm$ 0.1}&  \textbf{99.2$\pm$ 0.3 }&\textbf{99.5}&\textbf{94.6}&91.3\\
LSTM+DEV (Sp)      & 97.7$\pm$ 0.2&97.7 $\pm$ 0.2 &  97.4$\pm$ 0.4&97.9&94.2&89.3\\
\bottomrule
\end{tabular}}
\end{sc}
\end{minipage}
\end{table}
\paragraph{Robustness.} The path development layer exhibits robustness and improves the robustness of LSTM in the following two settings: (1) data dropping; (2) timescale distribution shift.\\
\textbf{\textit{Robustness to dropping data.}}
 \cref{CT_tab} demonstrates that, as observed for other continuous time series models, the accuracy of DEV and LSTM+DEV remains similar across different drop rates, thus exhibiting robustness against data dropping and irregular sampling of time series. LSTM achieves competitive accuracy of $\sim$94\% when the drop rate is relatively low, with a drastic performance decrease (6\%) for the high drop rate (70\%). However, by adding the development layer, especially the one with the special orthogonal group, one significantly improves the test accuracy of LSTM by 11.4\% to a high drop rate (70\%).\\
\textbf{\textit{Robustness to timescale distribution shift.}} We follow the approach in \cite{gu2020hippo}, where the train and test character trajectories are sampled at different timescales. \cref{CT_tab} shows that, while all the baselines suffer from severe deterioration in test accuracy in this setting, the path development layer retains high performance, similar to HIPPO (\cite{gu2020hippo}) and improves the accuracy of LSTM by $\sim$60\%. This is because the path development is invariant under time-reparameterisation (\cref{Lem: time-para-invariance}). 

\paragraph{Improvement of training LSTMs by Development Layer (LSTM+DEV).} 
Apart from model robustness and compactness, the additional development layer improves the LSTM in terms of (1), training stability and convergence;  and (2), less need for hyper-parameter tuning. This has led to better model performances with faster training.\\
\textit{\textbf{Training stability and convergence.}}
The left and middle subplots of \cref{fig: training_stabability_loss_CharTraData} show the oscillatory evolution of the loss and accuracy of the LSTM model during training, which can be significantly stabilised by adding DEV(SO) and DEV(Sp). In particular, the performance of DEV(SO) is better than that of DEV(Sp) in terms of both training stability and prediction accuracy. It suggests that the development layer may facilitate the saturation of hidden states of LSTMs and resolving the gradient vanishing/exploding issues, by virtue of the boundedness of gradients resulting from $|\partial_{z_{n-1}}z_{n}|=1$ of DEV(SO) (Remark~\ref{remark:dev_bound}). It can thus serve as an effective means for alleviating long temporal dependency issues of LSTM. Moreover, our hybrid model has a much faster training convergence rate than LSTM.
\\\textbf{\textit{Sensitivity of learning rate}}. \cref{fig: training_stabability_loss_CharTraData} (Right) shows that the LSTM performance is highly sensitive to the learning rate. So, extensive, time-consuming hyper-parameter tuning is a must to ensure high performance of LSTM. However, the LSTM+DEV model training is much more robust to the learning rate, leading to less hyper-parameter tuning and consistently superior performance over LSTM. 
\begin{figure}[t]
\caption{The comparison plot of LSTM and LSTM+DEV on Character Trajectories with 30\% drop rate. (Left) The evolution of validation loss against training time. (Middle) The evolution of validation accuracy against training time. The mean curve with $\pm$ std indicated by the shaded area is computed over 5 runs.  (Right) The boxplot of the validation accuracy for varying learning rate. }
\label{fig: training_stabability_loss_CharTraData}
\centering
\vskip 0.1in
  \includegraphics[width=0.95\linewidth]{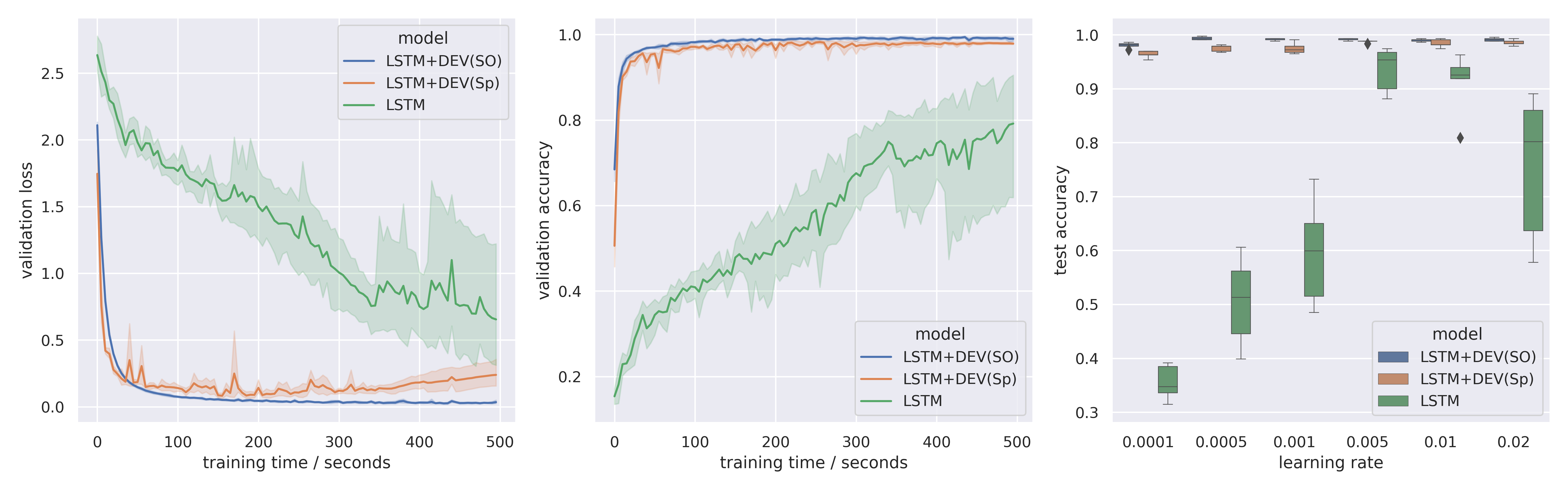}
\end{figure}
\subsubsection{Sequential image classification}
We consider three time series datasets of large scale for the sequential image classification:  sequential MNIST(\cite{le2015simple}), permuted sequential MNIST (\cite{le2015simple}), and sequential CIFAR-10 (\cite{chang2017dilated}), consisting of sequences of pixel-by-pixel with lengths 784 and 1024, respectively. Each has 50000 training samples and 10000 test samples. 
\paragraph{Accuracy comparison and model compactness.} On the sequential MNIST and CIFAR-10 datasets, stacking a path development layer of a $10 \times 10$ matrix to the LSTM significantly increases the performance of the baseline LSTM by 7--11\% . Among RNN-based models, the hybrid model achieves similar results as the state-of-the-art Lipschitz RNN model, but with only 1/5 model parameters. 

In comparison with the state-of-the-art models FlexTCN (\cite{romero2021flexconv}), S4  (\cite{gu2021efficiently}),  and LSSL (\cite{gu2021combining}), the proposed LSTM+DEV model slightly underperforms on the sequential MINST and permuted MNIST data by 0.7\% and 2.6\% respectively. However, note it only uses 72k model parameters compared with a larger number of model parameters required for sota models, i.e., 375k (FlexTCN), 6000k (S4) and 200k (LSSL). However, there is still a significant gap between LSTM+DEV and the stoa models on the Cifar10 dataset, which merits further investigation in future.
\begin{table}[h]
\centering
\begin{minipage}[t]{0.99\textwidth}
\caption{Test accuracies (\%) of the sequential image classification, $*$ indicates the results from our reproduced baseline models }
\label{tab_mnist_cifar}
\begin{sc}
\resizebox{0.99\columnwidth}{!}{
\begin{tabular}{l|ccc|cc|c}
\toprule
Model&\multicolumn{3}{|c|}{Test accuracy (\%)} &Time (s/epoch)& Memory(Mb)& Params(k)\\
\midrule 
 Dataset & $s$MNIST & $p$MNIST&$s$CIFAR-10 & \multicolumn{2}{|c|}{$s$MNIST/$s$CIFAR-10}  & \\
\midrule
LSTM\cite{bai2018empirical} &  87.2& 85.7 &  57.6$*$ &
25/29$*$&1638/1798$*$&70\\
$r$-LSTM \cite{trinh2018learning} & 98.4 & 95.2 &72.2 & -&-&500\\
DTRIV\cite{lezcano2019trivializations} & 98.9 & 96.5 & 30.0$*$ & 120/131$*$&1887/2183$*$&130\\
EXPRNN\cite{lezcano2019cheap}&  98.4 &96.2&35.9$*$  &119/125$*$&1887/2183$*$& 130\\
LipschitzRNN \cite{erichson2020lipschitz}& 99.4 &96.3 &64.2&-&-&260\\

\midrule
CKConv \cite{romero2021ckconv}  &99.3 &98.0 & 62.3& -&-&98\\
FlexTCN \cite{romero2021flexconv}  &\textbf{99.6} &98.6 & 80.8& -&-&375\\
S4 \cite{gu2021efficiently}  &\textbf{99.6} &98.7 & \textbf{91.1}& -&-&6000\\
LSSL\cite{gu2021combining}   & 99.5&\textbf{98.8} &84.7& -&-&200/2000\\
\midrule

LSTM+DEV(SO) &98.9$\pm$ 0.2   &96.2 $\pm$ 0.3 & 64.3$\pm$ 0.9& 62/63 & 1957/2207&72\\  
\bottomrule
\end{tabular}}
\end{sc}
\end{minipage}
\end{table}
\paragraph{Training time \& Memory.}\label{speed_memory} The training time of the development layer is comparable to that of LSTM due to the similar recurrent structure. The DEV layer training is $\geq 5$ times faster than the continuous-time series baseline NCDE (\cite{kidger2020neural}). The training time and memory consumption of LSTM+DEV(SO) on sequential image data are reported in \cref{tab_mnist_cifar}. The memory of LSTM+DEV(SO) only increases roughly 25\% compared to the plain LSTM, with about doubled training time. In comparison to ExpRNN, a geometric RNN model to alleviate gradient issues of RNNs, LSTM+DEV has only half of the model parameters but  outperforms consistently, especially in sequential CIFAR-10 data. 
\subsection{Modelling dynamics on non-Euclidean spaces} \label{dynamics}
Brownian motion on $\mathbb{S}^2$ and $N$-body motions are simulated to demonstrate the efficacy of the development network, taking into account the group structure. Our LSTM+DEV model, with Lie groups appropriately chosen, can constrain the prediction to prescribed non-Euclidean space and significantly boost the performance over  baseline LSTM models.
\subsubsection{Brownian motion on $\mathbb{S}^2$}
We simulated 20000 samples of the discretised Brownian motion $B$ on the unit $2$-sphere $\mathbb{S}^2$ with time length $L=500$ and equal time spacing $\Delta t = 2e^{-3}$ by the random walk approach (\cite{novikov2020random}). The driving path for simulating $B$ is a random walk $X$ of length $L$ in $\mathbb{R}^2$. The task of predicting the Brownian motion from the given driving random walk is formulated as a sequence-to-sequence supervised learning problem with input $X$ and output $B$. We benchmark our LSTM+DEV model with $SO(3)$-representation against the baselines LSTM and ExpRNN. Essentially, the DEV(SO(3)) layer is used as a final pooling layer to map the output trajectories to the constrained Lie group. See \cref{subsec:BM_s2} for details on data simulation and model architecture.
\begin{figure}[t]
    \centering
    \includegraphics[width=0.99\linewidth]{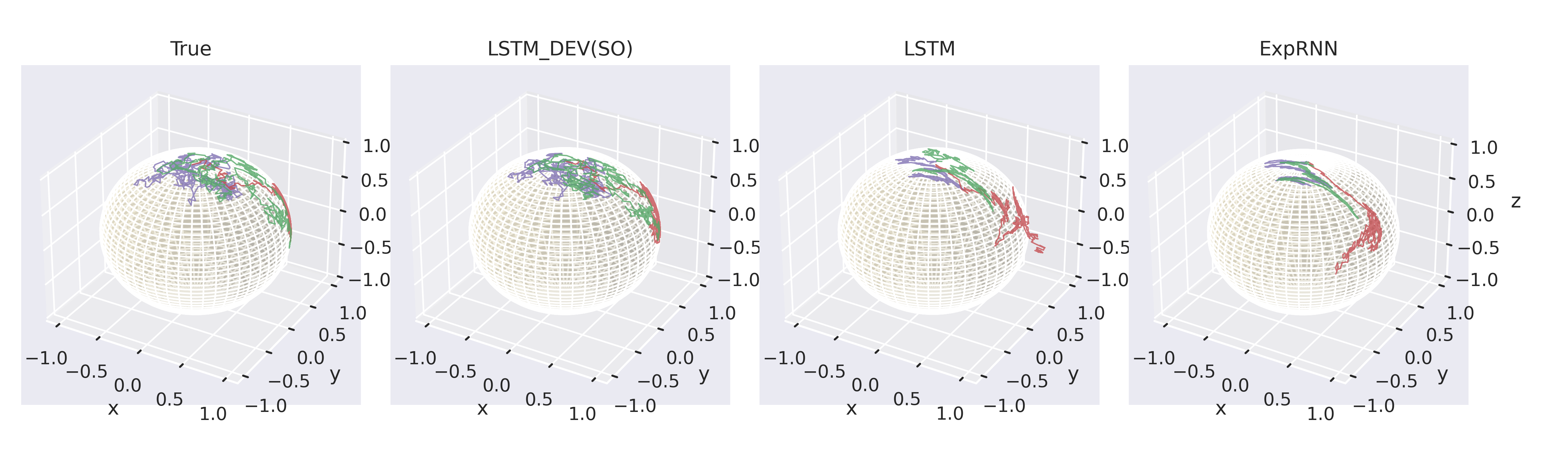}
    \caption{True and predicted sample trajectories of the Brownian motion on $\mathbb{S}^2$ on the test set. The test MSE of LSTM+DEV(SO(3)), LSTM and ExpRNN are \textbf{0.0184}, 0.109 and 0.11 respectively. }\label{fig:BM_2sphere}
\end{figure}

As illustrated in \cref{fig:BM_2sphere}, the proposed LSTM+DEV(SO(3)) reduces the test MSE of both baselines by around 83\% . Its predicted trajectories always stay on the sphere thanks to the built-in Lie group structure of the development output. It is highlighted that ExpRNN enforces model weights to the orthogonal group, but it can not constrain the output to live in the desired Lie group (see the red sample leaves from the sphere).\\

 \begin{table}[H]
  \captionof{table}{Test MSE (mean ± std, computed across 5 runs) on the $N$-body simulation dataset}\label{nbody_tab}
    \centering
\resizebox{0.6\columnwidth}{!}{
\begin{tabular}{lccc}
\toprule
 & &Test MSE (1$\mathrm{e}$-3)& \\
\midrule
Prediction steps & 100 &300 &500\\
\midrule
Static &7.79 &61.1&157\\
LSTM    & 3.54$\pm$ 0.22 & 22.1 $\pm$ 1.5 &63.2 $\pm$1.4  \\
LSTM+DEV(SO(2)) &  5.50 $\pm$ 0.01&47.5$\pm$0.19&128$\pm$0.58\\
LSTM+DEV(SE(2)) & \textbf{0.505$\pm$0.01} &\textbf{6.94$\pm$ 0.16}  &\textbf{25.6$\pm$ 1.5}\\
\bottomrule
\end{tabular}}     
\end{table}
\subsubsection{$N$-body simulations}
Following  \cite{kipf2018neural}, we use the simulated dataset of 5 charged particles and consider a regression task of predicting the future $k$-step location of the particles. The input sequence consists of the location and velocity of particles in the past 500 steps. As coordinate transforms from the current location $x_{t}$ to the future location $x_{t+k}$ lie in $SE(2)$ (the special Euclidean group), we propose to use the LSTM+DEV with $SE(2)$-representation to model such transforms. Details of data simulation and model architecture can be found in \cref{subsec:n_body}. To benchmark our proposed method, we consider two other baselines: (1) the current location (static) and (2) the LSTM model. To justify the choice of the Lie algebra here, we compare our method with LSTM+DEV(SO(2)).

\begin{figure}[H]
    \centering
      \includegraphics[width=0.6 \linewidth]{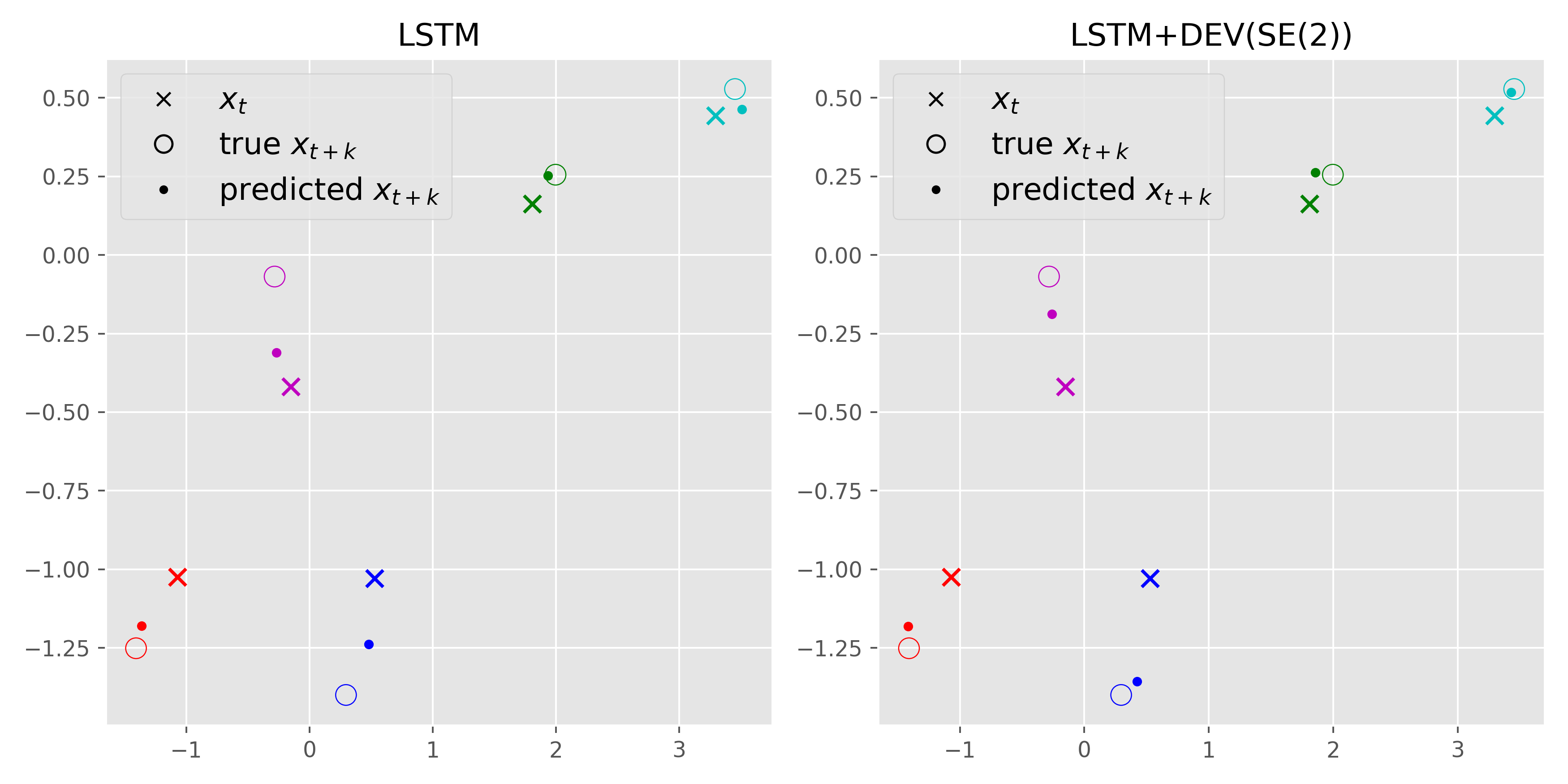}
      \captionof{figure}{Prediction comparison between LSTM (left) and LSTM+DEV (SE(2)) (right) in the 5-body simulations. }\label{fig: Nobody_prediction_visualization}
\end{figure}

As shown in \cref{nbody_tab}, the proposed LSTM+DEV(SE(2)) consistently and significantly outperforms both baselines and LSTM+DEV(SO(2)) in terms of MSE for varying $k \in \{100, 300, 500\}$. \cref{fig: Nobody_prediction_visualization} provides a visual demonstration to show the superior performance of the estimator of future location by our LSTM+DEV(SE(2)) in comparison to the strongest baseline LSTM, thus manifesting the efficacy of incorporating development layers with inherited group structures in learning tasks. 

\section{Discussions and future work}
\subsection{Vector field} The development layer has a \textit{linear}  driving vector field (in terms of controlled differential equations, CDE), which enables us to derive an explicit, analytic solution and use it for fast computation of the development layer. However, this may impose limitation on the development layer as an efficient standalone model for modelling complex time series. In this regard, one may consider extending the development layer by parameterising the vector field via a neural network taking values in the \emph{Lie algebra}, and computing it by solving a nonlinear CDE in analogy with NCDE (\cite{kidger2020neural}). 

\subsection{Lie group representation} Although our exposition focuses on matrix groups, our methodology and implementation of the development layer can be naturally extended to general Lie groups. In view of recent works on incorporating Lie group representations to neural networks  (\cite{fuchs2020se, thomas2018tensor}), we expect that techniques developed therein may be utilised to increase the expressiveness of the development layer and further enhance its performance.

\subsection{Hybrid model architecture}
This work focuses on the proposed hybrid LSTM+Dev model, showing superior performance on various time series learning tasks. However, there is scope to design hybrid models combining the path development layer with other types of neural networks for modelling complex spatio-temporal data, such as video or skeleton-based action sequences. The selection of a suitable hybrid model architecture could potentially enhance performance by leveraging appropriate neural networks (e.g., Convolutional Neural Networks and Graph Neural Networks) to capture spatial dependencies, while utilizing the path development layer to extract temporal dynamics information.

\section{Conclusion}\label{sec: conclusion}
In this paper, we propose for sequential data tasks a novel, trainable path development layer with finite-dimensional matrix Lie groups built in. Empirical experiments demonstrate that the path development layer, by virtue of its algebraic/geometric structures and recurrent nature, proves advantageous in improving the accuracy and training stability of LSTM models. Based on the path development layer, a compact hybrid LSTM+DEV has been designed, which exhibits competitive performance and superior robustness. It may open up doors to efficient, accurate modelling and prediction of a wide range of time series data with equivariance structures arising from the particular geometries of the underlying domains, such as Hamiltonian dynamical systems, rigid body motion, and molecular dynamics, on various non-Euclidean spaces with nontrivial symmetry groups. 

\section*{Acknowledgments}
HN is supported by the EPSRC under the program grant EP/S026347/1 and the Alan Turing Institute under the EPSRC grant EP/N510129/1. The research of SL is supported by NSFC Project $\#$12201399, National Natural Science Foundation of China, and Shanghai Frontier Research Institute for Modern Analysis. SL and HN are both supported by the SJTU-UCL joint seed fund WH610160507/067 and the Royal Society International Exchanges 2023 grant (IEC/NSFC/233077). LH is supported by University College London and the China Scholarship Council under the UCL-CSC scholarship (No. 201908060002).
The authors extend their gratitude to Terry Lyons for insightful discussions. HN thanks Kevin Schlegel and Jiajie Tao for proofreading the paper. Moreover, HN is grateful to Jing Liu for her help with Figure 1.

\bibliography{sample}

\begin{thebibliography}{62}
\providecommand{\natexlab}[1]{#1}
\providecommand{\url}[1]{\texttt{#1}}
\expandafter\ifx\csname urlstyle\endcsname\relax
  \providecommand{\doi}[1]{doi: #1}\else
  \providecommand{\doi}{doi: \begingroup \urlstyle{rm}\Url}\fi

\bibitem[Al-Mohy \& Higham(2009)Al-Mohy and Higham]{al2009computing}
Awad~H Al-Mohy and Nicholas~J Higham.
\newblock {Computing the Fr{\'e}chet derivative of the matrix exponential, with an application to condition number estimation}.
\newblock \emph{SIAM Journal on Matrix Analysis and Applications}, 30\penalty0 (4):\penalty0 1639--1657, 2009.

\bibitem[Al-Mohy \& Higham(2010)Al-Mohy and Higham]{al2010new}
Awad~H Al-Mohy and Nicholas~J Higham.
\newblock A new scaling and squaring algorithm for the matrix exponential.
\newblock \emph{SIAM Journal on Matrix Analysis and Applications}, 31\penalty0 (3):\penalty0 970--989, 2010.

\bibitem[Arjovsky et~al.(2016)Arjovsky, Shah, and Bengio]{arjovsky2016unitary}
Martin Arjovsky, Amar Shah, and Yoshua Bengio.
\newblock Unitary evolution recurrent neural networks.
\newblock In \emph{International Conference on Machine Learning}, pp.\  1120--1128, 2016.

\bibitem[Arribas et~al.(2018)Arribas, Goodwin, Geddes, Lyons, and Saunders]{arribas2018signature}
Imanol~Perez Arribas, Guy~M Goodwin, John~R Geddes, Terry Lyons, and Kate~EA Saunders.
\newblock A signature-based machine learning model for distinguishing bipolar disorder and borderline personality disorder.
\newblock \emph{Translational psychiatry}, 8\penalty0 (1):\penalty0 1--7, 2018.

\bibitem[Bagnall et~al.(2018)Bagnall, Dau, Lines, Flynn, Large, Bostrom, Southam, and Keogh]{bagnall2018uea}
Anthony Bagnall, Hoang~Anh Dau, Jason Lines, Michael Flynn, James Large, Aaron Bostrom, Paul Southam, and Eamonn Keogh.
\newblock {The UEA multivariate time series classification archive, 2018}.
\newblock \emph{arXiv preprint arXiv:1811.00075}, 2018.

\bibitem[Bai et~al.(2018)Bai, Kolter, and Koltun]{bai2018empirical}
Shaojie Bai, J~Zico Kolter, and Vladlen Koltun.
\newblock An empirical evaluation of generic convolutional and recurrent networks for sequence modeling.
\newblock \emph{arXiv preprint arXiv:1803.01271}, 2018.

\bibitem[Bengio et~al.(1994)Bengio, Simard, and Frasconi]{bengio1994learning}
Yoshua Bengio, Patrice Simard, and Paolo Frasconi.
\newblock Learning long-term dependencies with gradient descent is difficult.
\newblock \emph{IEEE transactions on neural networks}, 5\penalty0 (2):\penalty0 157--166, 1994.

\bibitem[Biewald(2020)]{wandb}
Lukas Biewald.
\newblock Experiment tracking with weights and biases, 2020.
\newblock URL \url{https://www.wandb.com/}.
\newblock Software available from wandb.com.

\bibitem[Boedihardjo \& Geng(2020)Boedihardjo and Geng]{boedihardjo2020sl_2}
Horatio Boedihardjo and Xi~Geng.
\newblock {$SL_2 (\R)$-developments and Signature Asymptotics for Planar Paths with Bounded Variation}.
\newblock \emph{arXiv preprint arXiv:2009.13082}, 2020.

\bibitem[Boedihardjo et~al.(2021)Boedihardjo, Diehl, Mezzarobba, and Ni]{boedihardjo2021expected}
Horatio Boedihardjo, Joscha Diehl, Marc Mezzarobba, and Hao Ni.
\newblock The expected signature of brownian motion stopped on the boundary of a circle has finite radius of convergence.
\newblock \emph{Bulletin of the London Mathematical Society}, 53\penalty0 (1):\penalty0 285--299, 2021.

\bibitem[Bronstein et~al.(2017)Bronstein, Bruna, LeCun, Szlam, and Vandergheynst]{bronstein2017geometric}
Michael~M Bronstein, Joan Bruna, Yann LeCun, Arthur Szlam, and Pierre Vandergheynst.
\newblock Geometric deep learning: going beyond euclidean data.
\newblock \emph{IEEE Signal Processing Magazine}, 34\penalty0 (4):\penalty0 18--42, 2017.

\bibitem[Cao et~al.(2020)Cao, Yan, He, and He]{cao2020comprehensive}
Wenming Cao, Zhiyue Yan, Zhiquan He, and Zhihai He.
\newblock A comprehensive survey on geometric deep learning.
\newblock \emph{IEEE Access}, 8:\penalty0 35929--35949, 2020.

\bibitem[Chang et~al.(2017)Chang, Zhang, Han, Yu, Guo, Tan, Cui, Witbrock, Hasegawa-Johnson, and Huang]{chang2017dilated}
Shiyu Chang, Yang Zhang, Wei Han, Mo~Yu, Xiaoxiao Guo, Wei Tan, Xiaodong Cui, Michael Witbrock, Mark~A Hasegawa-Johnson, and Thomas~S Huang.
\newblock Dilated recurrent neural networks.
\newblock \emph{Advances in neural information processing systems}, 30, 2017.

\bibitem[Chen et~al.(2018)Chen, Rubanova, Bettencourt, and Duvenaud]{chen2018neural}
Ricky~TQ Chen, Yulia Rubanova, Jesse Bettencourt, and David Duvenaud.
\newblock Neural ordinary differential equations.
\newblock In \emph{Proceedings of the 32nd International Conference on Neural Information Processing Systems}, pp.\  6572--6583, 2018.

\bibitem[Chevyrev \& Kormilitzin(2016)Chevyrev and Kormilitzin]{chevyrev2016primer}
Ilya Chevyrev and Andrey Kormilitzin.
\newblock A primer on the signature method in machine learning.
\newblock \emph{arXiv preprint arXiv:1603.03788}, 2016.

\bibitem[Chevyrev \& Lyons(2016)Chevyrev and Lyons]{chevyrev2016characteristic}
Ilya Chevyrev and Terry Lyons.
\newblock Characteristic functions of measures on geometric rough paths.
\newblock \emph{The Annals of Probability}, 44\penalty0 (6):\penalty0 4049--4082, 2016.

\bibitem[Cho et~al.(2014)Cho, Van~Merri{\"e}nboer, Bahdanau, and Bengio]{cho2014properties}
Kyunghyun Cho, Bart Van~Merri{\"e}nboer, Dzmitry Bahdanau, and Yoshua Bengio.
\newblock {On the properties of neural machine translation: Encoder-decoder approaches}.
\newblock \emph{arXiv preprint arXiv:1409.1259}, 2014.

\bibitem[Coutin \& Qian(2002)Coutin and Qian]{coutin2002stochastic}
Laure Coutin and Zhongmin Qian.
\newblock Stochastic analysis, rough path analysis and fractional brownian motions.
\newblock \emph{Probability theory and related fields}, 122\penalty0 (1):\penalty0 108--140, 2002.

\bibitem[De~Brouwer et~al.(2019)De~Brouwer, Simm, Arany, and Moreau]{de2019gru}
Edward De~Brouwer, Jaak Simm, Adam Arany, and Yves Moreau.
\newblock {GRU-ODE-Bayes: Continuous modeling of sporadically-observed time series}.
\newblock In \emph{Advances in Neural Information Processing Systems 32}, 2019.

\bibitem[Driver(1995)]{driver1995primer}
Bruce~K Driver.
\newblock {A primer on Riemannian geometry and stochastic analysis on path spaces}.
\newblock \emph{University of California, San Diego}, 1995.

\bibitem[Eliasof et~al.(2024)Eliasof, Haber, and Treister]{eliasof2024feature}
Moshe Eliasof, Eldad Haber, and Eran Treister.
\newblock Feature transportation improves graph neural networks.
\newblock In \emph{Proceedings of the AAAI Conference on Artificial Intelligence}, volume~38, pp.\  11874--11882, 2024.

\bibitem[Erichson et~al.(2020)Erichson, Azencot, Queiruga, Hodgkinson, and Mahoney]{erichson2020lipschitz}
N~Benjamin Erichson, Omri Azencot, Alejandro Queiruga, Liam Hodgkinson, and Michael~W Mahoney.
\newblock Lipschitz recurrent neural networks.
\newblock \emph{arXiv preprint arXiv:2006.12070}, 2020.

\bibitem[Fuchs et~al.(2020)Fuchs, Worrall, Fischer, and Welling]{fuchs2020se}
Fabian~B Fuchs, Daniel~E Worrall, Volker Fischer, and Max Welling.
\newblock {SE(3)-transformers: 3d roto-translation equivariant attention networks}.
\newblock \emph{arXiv preprint arXiv:2006.10503}, 2020.

\bibitem[Gravina et~al.()Gravina, Bacciu, and Gallicchio]{gravinaanti}
Alessio Gravina, Davide Bacciu, and Claudio Gallicchio.
\newblock Anti-symmetric dgn: a stable architecture for deep graph networks.
\newblock In \emph{The Eleventh International Conference on Learning Representations}.

\bibitem[Gu et~al.(2020)Gu, Dao, Ermon, Rudra, and R{\'e}]{gu2020hippo}
Albert Gu, Tri Dao, Stefano Ermon, Atri Rudra, and Christopher R{\'e}.
\newblock Hippo: Recurrent memory with optimal polynomial projections.
\newblock \emph{Advances in Neural Information Processing Systems}, 33:\penalty0 1474--1487, 2020.

\bibitem[Gu et~al.(2021{\natexlab{a}})Gu, Goel, and R{\'e}]{gu2021efficiently}
Albert Gu, Karan Goel, and Christopher R{\'e}.
\newblock Efficiently modeling long sequences with structured state spaces.
\newblock \emph{arXiv preprint arXiv:2111.00396}, 2021{\natexlab{a}}.

\bibitem[Gu et~al.(2021{\natexlab{b}})Gu, Johnson, Goel, Saab, Dao, Rudra, and R{\'e}]{gu2021combining}
Albert Gu, Isys Johnson, Karan Goel, Khaled Saab, Tri Dao, Atri Rudra, and Christopher R{\'e}.
\newblock Combining recurrent, convolutional, and continuous-time models with linear state space layers.
\newblock \emph{Advances in Neural Information Processing Systems}, 34, 2021{\natexlab{b}}.

\bibitem[Haber \& Ruthotto(2017)Haber and Ruthotto]{haber2017stable}
Eldad Haber and Lars Ruthotto.
\newblock Stable architectures for deep neural networks.
\newblock \emph{Inverse problems}, 34\penalty0 (1):\penalty0 014004, 2017.

\bibitem[Hairer(2014)]{hairer}
Martin Hairer.
\newblock A theory of regularity structures.
\newblock \emph{Invent. Math.}, 198:\penalty0 269--504, 2014.

\bibitem[Hambly \& Lyons(2010)Hambly and Lyons]{hambly2010uniqueness}
Ben Hambly and Terry Lyons.
\newblock Uniqueness for the signature of a path of bounded variation and the reduced path group.
\newblock \emph{Annals of Mathematics}, pp.\  109--167, 2010.

\bibitem[Higham(2005)]{higham2005scaling}
Nicholas~J Higham.
\newblock The scaling and squaring method for the matrix exponential revisited.
\newblock \emph{SIAM Journal on Matrix Analysis and Applications}, 26\penalty0 (4):\penalty0 1179--1193, 2005.

\bibitem[Hochreiter \& Schmidhuber(1997)Hochreiter and Schmidhuber]{hochreiter1997long}
Sepp Hochreiter and J{\"u}rgen Schmidhuber.
\newblock Long short-term memory.
\newblock \emph{Neural computation}, 9\penalty0 (8):\penalty0 1735--1780, 1997.

\bibitem[Katsman et~al.(2024)Katsman, Chen, Holalkere, Asch, Lou, Lim, and De~Sa]{katsman2024riemannian}
Isay Katsman, Eric Chen, Sidhanth Holalkere, Anna Asch, Aaron Lou, Ser~Nam Lim, and Christopher~M De~Sa.
\newblock Riemannian residual neural networks.
\newblock \emph{Advances in Neural Information Processing Systems}, 36, 2024.

\bibitem[Kiani et~al.(2022)Kiani, Balestriero, LeCun, and Lloyd]{kiani2022projunn}
Bobak Kiani, Randall Balestriero, Yann LeCun, and Seth Lloyd.
\newblock projunn: efficient method for training deep networks with unitary matrices.
\newblock \emph{Advances in Neural Information Processing Systems}, 35:\penalty0 14448--14463, 2022.

\bibitem[Kidger et~al.(2019)Kidger, Bonnier, Perez~Arribas, Salvi, and Lyons]{kidger2019deep}
Patrick Kidger, Patric Bonnier, Imanol Perez~Arribas, Cristopher Salvi, and Terry Lyons.
\newblock Deep signature transforms.
\newblock \emph{Advances in Neural Information Processing Systems}, 32, 2019.

\bibitem[Kidger et~al.(2020)Kidger, Morrill, Foster, and Lyons]{kidger2020neural}
Patrick Kidger, James Morrill, James Foster, and Terry Lyons.
\newblock Neural controlled differential equations for irregular time series.
\newblock \emph{Thirty-fourth Conference on Neural Information Processing Systems}, 2020.

\bibitem[Kingma \& Ba(2014)Kingma and Ba]{kingma2014adam}
Diederik~P Kingma and Jimmy Ba.
\newblock Adam: A method for stochastic optimization.
\newblock \emph{arXiv preprint arXiv:1412.6980}, 2014.

\bibitem[Kipf et~al.(2018)Kipf, Fetaya, Wang, Welling, and Zemel]{kipf2018neural}
Thomas Kipf, Ethan Fetaya, Kuan-Chieh Wang, Max Welling, and Richard Zemel.
\newblock Neural relational inference for interacting systems.
\newblock In \emph{International Conference on Machine Learning}, pp.\  2688--2697. PMLR, 2018.

\bibitem[Le et~al.(2015)Le, Jaitly, and Hinton]{le2015simple}
Quoc~V Le, Navdeep Jaitly, and Geoffrey~E Hinton.
\newblock A simple way to initialize recurrent networks of rectified linear units.
\newblock \emph{arXiv preprint arXiv:1504.00941}, 2015.

\bibitem[Levin et~al.(2013)Levin, Lyons, and Ni]{levin2013learning}
Daniel Levin, Terry Lyons, and Hao Ni.
\newblock Learning from the past, predicting the statistics for the future, learning an evolving system.
\newblock \emph{arXiv preprint arXiv:1309.0260}, 2013.

\bibitem[Lezcano-Casado(2019)]{lezcano2019trivializations}
Mario Lezcano-Casado.
\newblock Trivializations for gradient-based optimization on manifolds.
\newblock \emph{Advances in Neural Information Processing Systems}, 32:\penalty0 9157--9168, 2019.

\bibitem[Lezcano-Casado \& Mart{\i}nez-Rubio(2019)Lezcano-Casado and Mart{\i}nez-Rubio]{lezcano2019cheap}
Mario Lezcano-Casado and David Mart{\i}nez-Rubio.
\newblock Cheap orthogonal constraints in neural networks: A simple parametrization of the orthogonal and unitary group.
\newblock In \emph{International Conference on Machine Learning}, pp.\  3794--3803. PMLR, 2019.

\bibitem[Li \& Ni(2022)Li and Ni]{li2022expected}
Siran Li and Hao Ni.
\newblock Expected signature of stopped brownian motion on d-dimensional $c^{2, \alpha}$-domains has finite radius of convergence everywhere: 2 $\leq$ d $\leq$ 8.
\newblock \emph{Journal of Functional Analysis}, 282\penalty0 (12):\penalty0 109447, 2022.

\bibitem[Liu et~al.(2019)Liu, Si, Cao, Kumar, and Hsieh]{liu2019neural}
Xuanqing Liu, Si~Si, Qin Cao, Sanjiv Kumar, and Cho-Jui Hsieh.
\newblock {Neural SDE: Stabilizing neural ODE networks with stochastic noise}.
\newblock \emph{arXiv preprint arXiv:1906.02355}, 2019.

\bibitem[Lyons(2014)]{lyons2014rough}
Terry Lyons.
\newblock Rough paths, signatures and the modelling of functions on streams.
\newblock \emph{the Proceedings of the International Congress of Mathematicians 2014, Korea}, 2014.

\bibitem[Lyons \& Xu(2017)Lyons and Xu]{lyons2017hyperbolic}
Terry Lyons and Weijun Xu.
\newblock Hyperbolic development and inversion of signature.
\newblock \emph{Journal of Functional Analysis}, 272\penalty0 (7):\penalty0 2933--2955, 2017.

\bibitem[Lyons et~al.(2007)Lyons, Caruana, and L{\'e}vy]{lyons2007differential}
Terry~J Lyons, Michael Caruana, and Thierry L{\'e}vy.
\newblock \emph{Differential equations driven by rough paths}.
\newblock Springer, 2007.

\bibitem[Monti et~al.(2017)Monti, Boscaini, Masci, Rodola, Svoboda, and Bronstein]{monti2017geometric}
Federico Monti, Davide Boscaini, Jonathan Masci, Emanuele Rodola, Jan Svoboda, and Michael~M Bronstein.
\newblock Geometric deep learning on graphs and manifolds using mixture model cnns.
\newblock In \emph{Proceedings of the IEEE conference on computer vision and pattern recognition}, pp.\  5115--5124, 2017.

\bibitem[Morrill et~al.(2019)Morrill, Kormilitzin, Nevado-Holgado, Swaminathan, Howison, and Lyons]{morrill2019signature}
James Morrill, Andrey Kormilitzin, Alejo Nevado-Holgado, Sumanth Swaminathan, Sam Howison, and Terry Lyons.
\newblock The signature-based model for early detection of sepsis from electronic health records in the intensive care unit.
\newblock In \emph{2019 Computing in Cardiology (CinC)}, pp.\  Page--1. IEEE, 2019.

\bibitem[Morrill et~al.(2021)Morrill, Salvi, Kidger, and Foster]{morrill2021neural}
James Morrill, Cristopher Salvi, Patrick Kidger, and James Foster.
\newblock Neural rough differential equations for long time series.
\newblock In \emph{International Conference on Machine Learning}, pp.\  7829--7838. PMLR, 2021.

\bibitem[Novikov et~al.(2020)Novikov, Kuzmin, and Ahmadi]{novikov2020random}
Alexei Novikov, Dmitri Kuzmin, and Omid Ahmadi.
\newblock Random walk methods for monte carlo simulations of brownian diffusion on a sphere.
\newblock \emph{Applied Mathematics and Computation}, 364:\penalty0 124670, 2020.

\bibitem[Paszke et~al.(2019)Paszke, Gross, Massa, Lerer, Bradbury, Chanan, Killeen, Lin, Gimelshein, Antiga, Desmaison, Kopf, Yang, DeVito, Raison, Tejani, Chilamkurthy, Steiner, Fang, Bai, and Chintala]{NEURIPS2019_9015}
Adam Paszke, Sam Gross, Francisco Massa, Adam Lerer, James Bradbury, Gregory Chanan, Trevor Killeen, Zeming Lin, Natalia Gimelshein, Luca Antiga, Alban Desmaison, Andreas Kopf, Edward Yang, Zachary DeVito, Martin Raison, Alykhan Tejani, Sasank Chilamkurthy, Benoit Steiner, Lu~Fang, Junjie Bai, and Soumith Chintala.
\newblock {PyTorch: An Imperative Style, High-Performance Deep Learning Library}.
\newblock In \emph{Advances in Neural Information Processing Systems 32}, pp.\  8024--8035. Curran Associates, Inc., 2019.

\bibitem[Rangapuram et~al.(2018)Rangapuram, Seeger, Gasthaus, Stella, Wang, and Januschowski]{rangapuram2018deep}
Syama~Sundar Rangapuram, Matthias~W Seeger, Jan Gasthaus, Lorenzo Stella, Yuyang Wang, and Tim Januschowski.
\newblock Deep state space models for time series forecasting.
\newblock \emph{Advances in neural information processing systems}, 31, 2018.

\bibitem[Romero et~al.(2021{\natexlab{a}})Romero, Bruintjes, Tomczak, Bekkers, Hoogendoorn, and van Gemert]{romero2021flexconv}
David~W Romero, Robert-Jan Bruintjes, Jakub~M Tomczak, Erik~J Bekkers, Mark Hoogendoorn, and Jan~C van Gemert.
\newblock Flexconv: Continuous kernel convolutions with differentiable kernel sizes.
\newblock \emph{arXiv preprint arXiv:2110.08059}, 2021{\natexlab{a}}.

\bibitem[Romero et~al.(2021{\natexlab{b}})Romero, Kuzina, Bekkers, Tomczak, and Hoogendoorn]{romero2021ckconv}
David~W Romero, Anna Kuzina, Erik~J Bekkers, Jakub~M Tomczak, and Mark Hoogendoorn.
\newblock Ckconv: Continuous kernel convolution for sequential data.
\newblock \emph{arXiv preprint arXiv:2102.02611}, 2021{\natexlab{b}}.

\bibitem[Rossmann(2002)]{rossmann2001introduction}
Wulf Rossmann.
\newblock \emph{{Lie Groups: An Introduction Through Linear Groups.}}
\newblock Oxford graduate texts in mathematics; 5. Oxford University Press, 2002.
\newblock ISBN 0198596839.

\bibitem[Rubanova et~al.(2019)Rubanova, Chen, and Duvenaud]{rubanova2019latent}
Yulia Rubanova, Ricky~TQ Chen, and David Duvenaud.
\newblock {Latent ODEs for irregularly-sampled time series}.
\newblock In \emph{Proceedings of the 33rd International Conference on Neural Information Processing Systems}, pp.\  5320--5330, 2019.

\bibitem[Smith et~al.(2022)Smith, Warrington, and Linderman]{smith2022simplified}
Jimmy~TH Smith, Andrew Warrington, and Scott Linderman.
\newblock Simplified state space layers for sequence modeling.
\newblock In \emph{The Eleventh International Conference on Learning Representations}, 2022.

\bibitem[Thomas et~al.(2018)Thomas, Smidt, Kearnes, Yang, Li, Kohlhoff, and Riley]{thomas2018tensor}
Nathaniel Thomas, Tess Smidt, Steven Kearnes, Lusann Yang, Li~Li, Kai Kohlhoff, and Patrick Riley.
\newblock {Tensor field networks: Rotation-and translation-equivariant neural networks for 3d point clouds}.
\newblock \emph{arXiv preprint arXiv:1802.08219}, 2018.

\bibitem[Trinh et~al.(2018)Trinh, Dai, Luong, and Le]{trinh2018learning}
Trieu Trinh, Andrew Dai, Thang Luong, and Quoc Le.
\newblock Learning longer-term dependencies in rnns with auxiliary losses.
\newblock In \emph{International Conference on Machine Learning}, pp.\  4965--4974. PMLR, 2018.

\bibitem[Warden(2018)]{warden2018speech}
Pete Warden.
\newblock Speech commands: A dataset for limited-vocabulary speech recognition.
\newblock \emph{arXiv preprint arXiv:1804.03209}, 2018.

\bibitem[Xie et~al.(2017)Xie, Sun, Jin, Ni, and Lyons]{xie2017learning}
Zecheng Xie, Zenghui Sun, Lianwen Jin, Hao Ni, and Terry Lyons.
\newblock Learning spatial-semantic context with fully convolutional recurrent network for online handwritten chinese text recognition.
\newblock \emph{IEEE transactions on pattern analysis and machine intelligence}, 40\penalty0 (8):\penalty0 1903--1917, 2017.

\end{thebibliography}
\bibliographystyle{tmlr}

\newpage

\appendix
\begin{appendix}
\section{The signature of a path}\label{appendix_sig}
In this section, we briefly introduce the signature of a path,
which can be used as a principled and efficient feature of time series data \cite{lyons2007differential, levin2013learning,chevyrev2016primer, kidger2019deep}.

Denote by $\mathcal{V}_{1}\left([0, T], \R^d\right)$ the space of continuous paths $[0, T] \rightarrow \R^d$ with finite length. Write $T\left(\left(\R^d\right)\right)$ for tensor algebra space:
\begin{eqnarray*}
T\left(\left(\R^d\right)\right) := \bigoplus_{k \geq 0} \left(\R^d\right)^{\otimes k},
\end{eqnarray*}
equipped with tensor product and component-wise addition. The signature takes values in $T\left(\left(\R^d\right)\right)$, and its zeroth component is always $1$. 
\begin{definition}[Path Signature]
Let $J \subset [0, T]$ be a compact interval and $X \in \mathcal{V}_1\left([0, T],\R^d\right)$. The signature of $X$ over $J$, denoted by $S(X)_J$, is defined as follows:
\begin{equation*}
    S(X)_J = \left(1,\mathbf{X}^1_J,\mathbf{X}^2_J,\cdots\right),
\end{equation*}
where for each $k \geq 1$, $\mathbf{X}^{k}_{J} = \underset{\underset{u_{1}, \dots, u_{k} \in J}{u_{1} < \dots < u_{k}}} { \int } \dd X_{u_{1}} \otimes \dots \otimes \dd X_{u_{k}}.$
\end{definition}
The truncated signature of $X$ of order $k$ is \begin{equation}\label{new_truncation}
\pi_k (S(X))_J \equiv S^k(X)_J := \left(1,\mathbf{X}^1_{J},\mathbf{X}^2_J,\cdots,\mathbf{X}^k_J\right).    
\end{equation}
Its dimension equals $\sum_{i = 0}^{k}d^{i} = \frac{d^{k+1} - 1}{d - 1}$, which grows exponentially in $k$. The range of signature of all paths in $\mathcal{V}_{1}\left([0, T], \R^d\right)$ is denoted as $S\left((\mathcal{V}_1([0,T], \R^d)\right)$.

\begin{example}[Linear paths]
For a linear path $X: [0, T] \rightarrow \mathbb{R}^d$, its signature is given by
\begin{eqnarray*}
S(X)_{0, T} = \exp\,(X_T - X_{0}),
\end{eqnarray*}
where $\exp$ is the tensor exponential. Equivalently, the $k^{th}$ level of the signature equals
\begin{eqnarray*}
    X^{k}_{[0, T]} = \frac{1}{k!} (X_T - X_{0})^{\otimes k}.
\end{eqnarray*}
\end{example}

\begin{example}[Piecewise linear paths]
Let $x:= (x_0, x_1 \cdots, x_N) \in \mathbb{R}^{d \times N}$ be a discrete $d$-dimensional time series. Embed it in the piecewise linear path $X$ via linear interpolation. By the multiplicative property, the signature of $X$ is given by
\begin{eqnarray*}
S(X) = \exp(x_1 - x_0) \otimes \exp(x_2 - x_1) \otimes \cdots \exp(x_N - x_{N-1}).
\end{eqnarray*}
\end{example}

For a general continuous path of bounded variation, its signature can be obtained by taking limits of signatures of their piecewise linear approximations as the time mesh tends to $0$. For the more general case of paths of bounded $p$-variation ($p \geq 1$), see \cite{lyons2007differential}.

The signature can be deemed as a non-commutative version of the exponential map defined on the space of paths. Indeed, consider $\exp: \mathbb{R} \rightarrow \mathbb{R}$, $\exp(t)=e^t=\sum_{k=0}^\infty \frac{t^k}{k!}$; it is the unique $C^1$-solution to the linear differential equation $\dd\exp(t) = \exp(t) \dd t$. Analogously, the signature map $t \mapsto S(X_{0, t})$ solves the following linear differential equation:

\begin{equation}
\begin{cases}
\dd S(X)_{0, t} = S(X)_{0, t} \otimes \dd X_t, \\
S(X)_{0, 0}= \mathbf{1} :=(1,0,0,\ldots)
\end{cases}
\end{equation}
The signature provides a top-down description of the path. It serves as an attractive tool for machine learning in light of the properties manifested in the three results below. 
\begin{lemma}[Invariance under time reparameterization; \cite{lyons2007differential}]\label{sig_reparam}
Let $X \in \mathcal{V}^1\left([0,T],\R^d\right)$ and $\lambda: [0,T] \rightarrow [T_1,T_2]$ be a non decreasing surjection and define $X_t^\lambda := X_{\lambda_t}$ for the representation of X under $\lambda$. Then for every $s,t\in [0,T]$,
\begin{equation}
    S(X)_{\lambda_s,\lambda_t} =  S\left(X^{\lambda}\right)_{s,t}
\end{equation}
\end{lemma}
\begin{theorem}[Uniqueness of signature; \cite{hambly2010uniqueness}]\label{uniqueness} Let $X\in \mathcal{V}^1([0,T],E)$. Then $S(X)$ determines $X$ up to the tree-like equivalence. 
\end{theorem}

\begin{theorem}[Signature Universal Approximation; \cite{lyons2007differential}] Suppose $f:S_1 \rightarrow \mathbb{R}$ is a continuous function, where $S_1$ is a compact subset of $S\left(\mathcal{V}^p(J,\R^d)\right)$ and $J \subset \R$ is an interval. 
The topology on $\mathcal{V}^p\left(J, \R^d\right)$ is chosen such that the signature map $S$ is continuous. Then for any $\epsilon >0$ there is a linear functional $L\in T\left(\left(\R^d\right)\right)^*$ such that
\begin{equation*}
    |f(a) - L(a)| \leq \epsilon \qquad \text{for every $a\in S_1$}.
\end{equation*} 
\end{theorem}

\section{The development of a path}\label{sec:appendix_dev}
\subsection{Connection with signature}
Recall that $\mathcal{V}_1\left([0, T], \R^d\right)$ denote the space of continuous paths $[0, T] \rightarrow \R^{d}$ of bounded $1$-variation and $\mathfrak{g}$ is a (matrix) Lie algebra.
\begin{lemma}[\cref{Lem:link-sig-dev}]
Let $X \in \mathcal{V}_1\left([0, T], \R^d\right)$ and $M \in \LL\left(\R^d, \mathfrak{g}\right)$. The development of the path $X$ under $M$ is given by $$D_{M}(X) = \widetilde{M}\big(S(X)\big).$$
\end{lemma}
\begin{proof}
By definition of the development, $D_M(X)$ is the endpoint of $(Z_{t})_{t \in [0, T]}$, which satisfies the linear controlled differential equation
\begin{eqnarray*}
\dd Z_t = Z_t M(\dd X_{t}),\qquad Z_{0} = {\rm Id}.
\end{eqnarray*}
By Picard iteration, one obtains
\begin{align*}
D_{M}(X):=Z_{T}= {\rm Id} + \sum_{n \geq 1}\int_{0 <t_1< \cdots <t_n<T}M(\dd X_{t_1}) \cdots M(\dd X_{t_n}).
\end{align*}
Thus, by linearity of $M$, the right-hand side of the above equation coincides with $\widetilde{M}\big(S(X)\big)$.  \end{proof}
\subsection{Dimension comparison with signature}
The dimension of the signature of a $d$-dimensional path of depth $k$ is $\frac{d^{k+1} -1}{d-1}$ if $d \geq 2$ and $k$ if $d = 1$. When specifying the Lie algebra $\mathfrak{g}$ of the development as a matrix algebra, the dimension of  development of order $m$ is $m^2$ (independent of $d$). The dimension of trainable weights of the development layer with static output is $dm^2$, while the signature layer does not have any trainable parameter.
\subsection{Invariance under time-reparameterisation }
\begin{lemma}[\cref{Lem: time-para-invariance}] Let $X \in \mathcal{V}_1\left([0,T],\R^d\right)$ and let $\lambda$ be a non-decreasing $C^1$-diffeomorphism from $[0,S]$ onto $[0, T]$. Define $X_t^\lambda := X_{\lambda_t}$ for $t \in [0, T]$. Then for all $M \in \LL\left(\R^d,\mathfrak{g}\right)$,
\begin{equation*}
D_{M}(X_{\lambda_s, \lambda_t}) = D_{M}\left(X^{\lambda}_{s,t}\right).
\end{equation*}
\end{lemma}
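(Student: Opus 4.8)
The plan is to exploit the fact that the path development is the \emph{unique} solution of the linear controlled differential equation of \cref{def:dev}, and to show that reparametrising the driving path $X$ by $\lambda$ produces precisely the reparametrised solution; uniqueness then forces the two endpoints to coincide. Let $(Z_u)_{u \in [\lambda_s, \lambda_t]}$ denote the development of $X$ over $[\lambda_s, \lambda_t]$, so that $Z$ solves
\begin{equation*}
\dd Z_u = Z_u\, M(\dd X_u), \qquad Z_{\lambda_s} = e,
\end{equation*}
and $D_M(X_{\lambda_s, \lambda_t}) = Z_{\lambda_t}$. Likewise let $(W_v)_{v \in [s,t]}$ be the development of $X^\lambda$ over $[s,t]$, solving $\dd W_v = W_v\, M(\dd X^\lambda_v)$ with $W_s = e$, so that $D_M(X^\lambda_{s,t}) = W_t$.

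First I would define the reparametrised curve $\widetilde{W}_v := Z_{\lambda_v}$ for $v \in [s,t]$ and verify that it solves the \emph{same} CDE as $W$. The crux is the change-of-variable identity $\dd \widetilde{W}_v = \dd Z_{\lambda_v} = Z_{\lambda_v}\, M(\dd X_{\lambda_v}) = \widetilde{W}_v\, M(\dd X^\lambda_v)$, where the final equality uses $X^\lambda_v = X_{\lambda_v}$ so that the driving increments match after the substitution $u = \lambda_v$. Since moreover $\widetilde{W}_s = Z_{\lambda_s} = e = W_s$, uniqueness of solutions to the linear CDE driven by the bounded-variation path $X^\lambda$ yields $\widetilde{W} \equiv W$ on $[s,t]$; evaluating at $v = t$ gives $Z_{\lambda_t} = W_t$, which is exactly the asserted identity.

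The main obstacle is to make the change-of-variable step rigorous when $\lambda$ is merely continuous and non-decreasing rather than a smooth diffeomorphism. Because $\lambda$ may possess flat segments it need not be invertible, and on such a segment $X^\lambda$ is constant, so $\dd X^\lambda = 0$ there and the segment contributes no mass; one must check that the monotone substitution $u = \lambda_v$ in the Young / Riemann--Stieltjes integral defining $Z$ stays valid in this degenerate situation, and then invoke the corresponding uniqueness theorem for linear CDEs driven by paths of finite $1$-variation. This is a standard but slightly delicate point in the bounded-variation setting.

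Alternatively, one may bypass the CDE entirely using \cref{Lem:link-sig-dev}: since $D_M(\cdot) = \widetilde{M}(S(\cdot))$, it suffices to prove the reparametrisation invariance of the signature, namely $S(X)_{[\lambda_s,\lambda_t]} = S(X^\lambda)_{[s,t]}$. This reduces, level by level, to showing that $\int_{\lambda_s < u_1 < \cdots < u_k < \lambda_t} \dd X_{u_1} \otimes \cdots \otimes \dd X_{u_k}$ equals $\int_{s < v_1 < \cdots < v_k < t} \dd X^\lambda_{v_1} \otimes \cdots \otimes \dd X^\lambda_{v_k}$ under the same substitution $u_i = \lambda_{v_i}$, now applied to finitely many nested integrals; applying $\widetilde{M}$ to the resulting equality of signatures then gives the claim. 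Either route isolates the same analytic heart: invariance of the iterated Stieltjes integral under a non-decreasing time change.
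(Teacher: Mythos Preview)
Your proposal is correct. Your \emph{alternative} route via \cref{Lem:link-sig-dev} and reparametrisation invariance of the signature is precisely what the paper does: its entire proof reads ``It follows directly from \cref{Lem:link-sig-dev} and the invariance under time reparameterisation of the signature,'' with a citation for the latter. Your \emph{primary} route, by contrast, is genuinely different: you argue directly at the level of the linear CDE, showing that $v \mapsto Z_{\lambda_v}$ solves the same equation as $W$ and invoking uniqueness. This avoids any reference to the signature and is more self-contained, but, as you correctly flag, it requires you to justify the monotone change of variable in the Riemann--Stieltjes integral when $\lambda$ may have flat segments --- exactly the analytic point the paper sidesteps by citing the signature result. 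In effect the two arguments have the same analytic core (invariance of Stieltjes-type integrals under monotone time change); the paper simply packages it inside a known theorem about signatures, whereas your direct approach unpacks it explicitly.
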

\begin{proof}
It follows directly from \cref{Lem:link-sig-dev} and the invariance under time reparameterisation of the  signature. See  \cite{chevyrev2016primer}.
\end{proof}

\subsection{Universality}


Let $W$ be a vector space. In the sequel, we denote by $\mathfrak{gl}(W)$ the space of endomorphisms of $W$ equipped with the Lie bracket $[\sigma_1, \sigma_2] :=  \sigma_1 \circ \sigma_2 - \sigma_2 \circ \sigma_1$.

\begin{definition}[Lie algebra representation]
Let $\mathfrak{g}$ be a Lie algebra. A Lie algebra representation of $\lie$ on a vector space $W$ is a Lie algebra homomorphism $\mathfrak{g} \rightarrow \mathfrak{gl}(W)$. That is, a linear map satisfying $
\rho([X_1, X_2]) = [\rho(X_1), \rho(X_2)]$ for all  $X_1, X_2 \in \mathfrak{g}$.
\end{definition}


To fix the idea, endow with $(\R^d)^{\otimes k}$ for each $k \in \mathbb{N}$ the projective norm; see \cite{chevyrev2016characteristic}. Let $\mathcal{E}$ be the space of tensor elements with infinite radius of convergence:
\begin{equation*}
\mathcal{E}:= \left\{x:=(x_0, x_1, \cdots ) \in T((\R^d)):\, \sum_{k = 0}^\infty  \|x_k\|\lambda^k < \infty \text{ for all }  \lambda \geq 0 \right\}.
\end{equation*}

\begin{theorem}[Universality of path development]\label{thm: universality_of_development} Let $\mathcal{G}$ be the space of group-like elements in $\mathcal{E}$ and $\mathcal{K} \subset \mathcal{G}$ be a compact subset. For any continuous function $f: \mathcal{K} \rightarrow \mathbb{C}$ and $\epsilon>0$, there exist  $\widetilde{M}_1, \cdots, \widetilde{M}_N \in \LL(\mathcal{G}, \mathfrak{gl}(m,\C))$  and $L_1, \ldots, L_N  \in \LL\left( \mathfrak{gl}(m,\C); \mathbb{C}\right)$ such that  
\begin{eqnarray*}
\sup_{x\in \mathcal{K}}\left|f(x) - \sum_{i = 1}^{N} L_i \circ \widetilde{M_i}(x)\right| < \epsilon.
\end{eqnarray*}
\end{theorem}
 
Here recall from Eq.~\eqref{canonical extension of M} that $\widetilde{M}_j \in \LL\left(\mathcal{G}, \mathfrak{gl}(m,\C)\right)$ is the algebra morphism extended from $M_j \in \LL\left(\R^d;\mathfrak{gl}(m,\C)\right)$ by naturality. As per  Definition~2.1 in \cite{chevyrev2016characteristic}, equip $T((\R^d))$ with the coarsest topology such that all algebra morphisms $\widetilde{M} \in \LL\left(\mathcal{G}, \mathcal{A}\right)$ which arise from $M\in \LL\left(\R^d; \mathcal{A}\right)$, with $\mathcal{A}$ ranging through all normed algebras, are continuous.

Also, note that when the algebraic homeomorphisms $\widetilde{M}_i$ have their domains restricted to $\mathcal{G}$, they indeed take values in the group ${\rm GL}(m,\C)$ of invertible matrices.

\begin{proof}
Let us consider
\begin{align*}
  \mathcal{A} &:=  \Bigg\{\text{finite-dimensional matrix representations of $T((\R^d))$} \\
  &\qquad\qquad\qquad \text{arising from extensions of all $M \in \LL\left(\R^d; \bigoplus_{m=1}^\infty \mathfrak{gl}(m,\C)\right)$}\Bigg\}. 
\end{align*}
By \cite{chevyrev2016characteristic}, Theorem~4.8, $\mathcal{A}$ separates points over $T((\R^d))$. Then define 
\begin{align}\label{B, stone-weierstrass}
    \mathcal{B} &:= \Bigg\{ \lambda \circ \widetilde{M}:\, \widetilde{M} \in \mathcal{A} \text{ and } \lambda \in \LL\left(\bigoplus_{m=1}^\infty \mathfrak{gl}(m,\C);\C\right)    \Bigg\} \subset \LL\left(\R^d;\C\right). 
\end{align}
We \emph{claim} that $\mathcal{B}$ is a complex unital ${}^*$-algebra.

We first explain how to conclude the proof assuming the \emph{claim}. Indeed, since $\mathcal{A}$ separates points over $T((\R^d))$, we know that $\mathcal{B}$  separates points over $T((\R^d))$ too. By Corollary~2.4 in \cite{chevyrev2016characteristic}, we know that $\mathcal{G}$ is Hausdorff, so $\mathcal{K} \subset \mathcal{G}$ is a compact Hausdorff space. It thus follows from the Stone--Weierstrass theorem  --- \emph{i.e.}, for a compact, Hausdorff topological space $\mathcal{X}$, if $S \subset C^0(\mathcal{X},\C)$ separates points, then the complex unital ${}^*$-algebra generated by $S$ is dense in $C^0(\mathcal{X},\C)$ --- that for any $f:\mathcal{K} \to \C$ and $\epsilon>0$, there exists $S \in \mathcal{B}$ (viewed as a function over the domain $\mathcal{K}$) such that $\|f-S\|_{C^0(\mathcal{K})} < \epsilon / 2$. In addition, as both $\LL\left(\R^d; \bigoplus_{m=1}^\infty \mathfrak{gl}(m,\C)\right)$ and $\LL\left(\bigoplus_{m=1}^\infty \mathfrak{gl}(m,\C);\C\right)$ have countable bases,  $\mathcal{B}$ is separable.  Thus one may find $\widetilde{M}_1, \cdots, \widetilde{M}_N \in \LL(\mathcal{G}, \mathfrak{gl}(m,\C))$ and $L_1, \ldots, L_N  \in \LL\left( \mathfrak{gl}(m,\C); \mathbb{C}\right)$ satisfying $\left\|S - \sum_{i=1}^N L_i \circ \widetilde{M_i}\right\|_{C^0(\mathcal{K})} < \epsilon / 2$. This concludes the proof.

Now let us prove the \emph{claim}. 
\begin{itemize}
    \item 
$\mathcal{A}$ contains the unit, which is the trivial representation. The unit of $\LL\left(\bigoplus_{m=1}^\infty \mathfrak{gl}(m,\C)\right)$ is the constant map 1. So $\mathcal{B}$ contains the unit too, which is also the constant map 1. 

\item
The product on $\mathcal{A}$ is given by the tensor product, \emph{i.e.}, the Kronecker product for matrices:  $\rho_1 \otimes \rho_2 (x):=\rho_1(x)\otimes \rho_2(x)$ for $\rho_1,\rho_2 \in \mathcal{A}$. The involution in $\mathcal{A}$ corresponds to the dual representation, namely that $\rho^*(x):=-\left[{\rho(x)}\right]^*$ (the adjoint matrix) for $\rho \in \mathcal{A}$. By the definition of $M \mapsto \widetilde{M}$, we have $ \left( \widetilde{M_1} \otimes \widetilde{M_2}\right)(x) = \widetilde{(M_1 \otimes I + I \otimes M_2)}(x)$,
where $I$ is the constant map which maps $x$ to the identity of the group.

\item
Thus, for $\lambda_1, \lambda_2 \in {\bf L} \left( \bigoplus_m \mathfrak{gl}(m, \C); \C\right)$ and $M_1, M_2  \in\LL\left(\R^d; \bigoplus_{m=1}^\infty \mathfrak{gl}(m,\C)\right)$, we may define the natural product $\bullet$ and involution $\star$ in $\mathcal{B}$ as follows:
\begin{eqnarray*}
&& \left[\lambda_1 \circ \widetilde{M_1}\right]^\star = \overline{\lambda_1 \circ \widetilde{M_1}^*},\\
&& \left(\lambda_1 \circ \widetilde{M_1}\right) \bullet \left(\lambda_2 \circ \widetilde{M_2}\right) = \left(\lambda_1 \otimes \lambda_2 \right) \circ \left(\widetilde{M_1 \otimes M_2}\right).
\end{eqnarray*}
It is easy to check that they are well defined and satisfy the desired properties as algebraic operations on unital ${}^*$-algebra over $\C$. In particular, their continuity follows from that of $M \mapsto \widetilde{M}$, which is part of the definition of the topology on $T((\R^d))$.  
\end{itemize}
The \emph{claim} is proved.   \end{proof}

Two remarks are in order concerning the above theorem.

\begin{remark}
\begin{enumerate}
    \item 
 It may appear odd that the product in $\mathcal{A}$ has the structure of tensor product of \emph{group} representations instead of \emph{Lie algebra} representations. This is intentionally designed to make $\mathcal{A}$ (with domain restricted to $\mathcal{G}$) a \emph{Hopf algbera} representation of the space of group-like elements $\mathcal{G}$. Indeed, one characterisation of $\mathcal{G}$ is that
\begin{equation*}
    \mathcal{G} = \left\{x \in T((\R^d))\setminus \{0\}:\, \blacktriangle (x)=x\otimes x\right\},
\end{equation*}
where $\blacktriangle$ is the Hopf algebra coproduct on the tensor algebra, extended from $\blacktriangle (v)=v\otimes {\bf 1} + {\bf 1} \otimes v$ on $\R^d$. In this provision $(\rho_1 \otimes \rho_2)(x) \equiv (\rho_1 \otimes \rho_2)[\blacktriangle(x)]$. On the other hand, the  sign in the dual representation corresponds to the antipode of the Hopf algebra.

\item
Our version of $\mathcal{A}$ is taken only for convenience of exposition; it is larger than necessary. In fact, we may adopt verbatim \cite{chevyrev2016characteristic}, Definition~4.1 for the definition of $\mathcal{A}$, in which only $M$ taking values in unitary operators on finite-dimensional complex Hilbert spaces are taken into account.  In this way, we may replace  $\mathfrak{gl}(m,\C)$ in \cref{thm: universality_of_development} by the unitary Lie algebra $\mathfrak{u}(m)$. The proof is essentially the same, as the Hopf algebra representations of $\mathcal{G}$ into $\bigoplus_m \mathfrak{u}(m)$ remains a complex unital ${}^*$-algebra separating points over $\mathcal{G}$.
\end{enumerate}
\end{remark}


\section{Backpropagation of the development layer}\label{sec:appendix_bp_development}

\subsection{Backpropagation of the development} \label{subsec:bp}
Consider a scalar field $\psi: G^{N+1} \rightarrow \mathbb{R}$  and an input $x:=(x_{n})_{n = 0}^{N} \in \R^{N+1}$. We aim to find optimal parameters $\theta^{*}$ minimising  $\psi(\mathcal{D}_{\theta}(x))$; in formula, 
\begin{eqnarray*}
\theta^{*} = {\rm arg min}_{\theta \in \mathfrak{g}^d}~ \psi(\mathcal{D}_{\theta}(x)),
\end{eqnarray*}
where $\mathcal{D}_{\theta}$ is the development layer defined in Eq.~\eqref{eqn:rec_dev}. Without loss of generality, we focus on the case where the output is the full sequence of path development. Euclidean gradient descent is not directly applicable here due to the Lie group structure of the output. 

The following identity allows us to compute the gradient of Lie algebra-valued parameters in our development layer by adapting the method of trivialisation in \cite{lezcano2019trivializations,lezcano2019cheap}.  Consider a smooth map $\Phi: \mathcal{N} \to \M$ between Riemannian manifolds  $(\mathcal{M},g_1)$ and ($\mathcal{N},g_2)$. Denote its differential by $\dd\Phi:T\mathcal{N}\rightarrow T\mathcal{M}$, and let the adjoint of $d\Phi$ with respect to $g_1$, $g_2$ be $d\Phi^*:T\mathcal{M} \rightarrow T\mathcal{N}$. 
\begin{corollary}\label{grad_para}
Let $\Phi: \mathcal{N} \rightarrow \mathcal{M}$ be a smooth map between Riemannian manifolds and let $f:\mathcal{M}\to\R$. Then $\nabla(f \circ \Phi) = \dd\Phi^{*}(\nabla f)$.
\end{corollary}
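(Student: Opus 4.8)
The plan is to prove the identity pointwise by unwinding the two definitions in play --- the gradient as the metric dual of the differential, and $\dd\Phi^*$ as the metric adjoint of $\dd\Phi$ --- and then invoking the chain rule together with nondegeneracy of the Riemannian metrics. No analytic input is required; the statement reduces to a fibrewise exercise in linear algebra, which is why it can be ``shown immediately from the definition of gradient.''

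First I would fix notation: write $g_1$ and $g_2$ for the metrics on $\mathcal{M}$ and $\mathcal{N}$, and recall that for a smooth $h:\mathcal{M}\to\R$ the gradient $\nabla h$ is characterised by $g_1(\nabla h, X) = \dd h(X)$ for every tangent vector $X$, and analogously for functions on $\mathcal{N}$ via $g_2$. Next I would recall the defining property of the adjoint $\dd\Phi^*$: at each $p\in\mathcal{N}$, for $v\in T_p\mathcal{N}$ and $w\in T_{\Phi(p)}\mathcal{M}$ one has $g_1\big(\dd\Phi(v),w\big)=g_2\big(v,\dd\Phi^*(w)\big)$. It is worth stressing here that $\dd\Phi^*$ is the fibrewise metric adjoint of $\dd\Phi$, not a pullback of differential forms, so as to avoid the common conflation.

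The core computation is then a single chain of equalities. Fix $p\in\mathcal{N}$ and an arbitrary $v\in T_p\mathcal{N}$. Applying in turn the gradient characterisation on $\mathcal{N}$ (for $f\circ\Phi$), the chain rule $\dd(f\circ\Phi)=\dd f\circ\dd\Phi$, the gradient characterisation on $\mathcal{M}$, and finally the adjoint property, I obtain
\begin{equation*}
g_2\big(\nabla(f\circ\Phi),v\big)=\dd(f\circ\Phi)(v)=\dd f\big(\dd\Phi(v)\big)=g_1\big(\nabla f,\dd\Phi(v)\big)=g_2\big(\dd\Phi^*(\nabla f),v\big).
\end{equation*}
Since this holds for every $v\in T_p\mathcal{N}$ and $g_2$ is nondegenerate, the tangent vectors $\nabla(f\circ\Phi)$ and $\dd\Phi^*(\nabla f)$ coincide in $T_p\mathcal{N}$; as $p$ was arbitrary, the identity holds as vector fields on $\mathcal{N}$.

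There is essentially no genuine obstacle here: the only points demanding care are bookkeeping of base points --- in particular that $\nabla f$ is evaluated at $\Phi(p)$, so that $\dd\Phi^*$ acts on $T_{\Phi(p)}\mathcal{M}$ --- and keeping the two distinct metrics straight on the two sides of the adjoint relation. I would make both of these explicit so that the single displayed chain above is unambiguous.
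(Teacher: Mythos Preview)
Your proof is correct and is precisely the standard argument alluded to by the paper's phrase ``shown immediately from the definition of gradient''; the paper itself does not supply a proof but defers to \cite{lezcano2019trivializations}. Your chain of equalities --- gradient characterisation, chain rule, gradient characterisation, adjoint property, then nondegeneracy --- is the natural unwinding, and your remarks on base points and keeping the two metrics straight are well placed.
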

\subsection{Proof of Theorem \ref{theorem_recurrence of differential}}\label{subsec: appendix, proof of 3.1}
\begin{proof}
We first consider a more general statement: let $\M$ and $\mathcal{N}$ be differentiable manifolds, let $f: \M \to \mathcal{N}$ be a smooth function, and let $\tpsi: \M \to \R$ be a scalar field factors through the graph of $f$. That is, there is a smooth function $\psi: \M \times \mathcal{N} \to \R$ such that $\tpsi(x) = \psi(x,f(x))$ for each $x \in \M$. Then we have the following identity on $T^*\M$:
\begin{align}\label{new, pushforw}
    \dd\tpsi = \left(\ppr_{\rm I}\right)_\# \dd\psi + \left[\left(\ppr_{\rm II}\right)_\# \dd\psi\right]\circ\dd f.
\end{align}
Here we view $\left(\ppr_{\rm I}\right)_\# \dd\psi \in T^*\M \oplus \{0\}$ and $\left[\left(\ppr_{\rm II}\right)_\# \dd\psi\right]\circ\dd f \in \{0\} \oplus T^*\M$; the symbols $\ppr_{\rm I}$ and $\ppr_{\rm II}$ denote the canonical projections from $\M \times \mathcal{N}$ onto $\M$ and $\mathcal{N}$, respectively. 

The demonstration for Eq.~\eqref{new, pushforw} is straightforward. For functions $f_1:E \to S_1$ and $f_2:E \to S_2$ where $E, S_1, S_2$ are arbitrary sets, we denote $f_1 \oplus f_2: E \to (S_1 \times S_2)$ as the function $(f_1\oplus f_2)(x) := (f_1(x), f_2(x))$. In this notation one has
\begin{equation*}
 \tpsi = \psi \circ \left({\rm Id}_\M \oplus f\right).
\end{equation*}
Using the chain rule we then deduce that
\begin{align*}
  \dd\tpsi = \dd\psi\big|_{{\rm range}\,\left({\rm Id}_\M \oplus f\right)}   \circ \dd \left({\rm Id}_\M \oplus f\right) = \dd\psi\big|_{{\rm graph}\, f} \circ \left({\rm Id}_{T\M} \oplus \dd f\right),
\end{align*}
with ${\rm Id}_{T\M}: T\M \to [T\M \cong T\M \oplus \{0\}]$ and $\dd f: T\M \to [T\mathcal{N} \cong \{0\} \oplus  T\mathcal{N}]$. This is tantamount to the right-hand side of Eq.~\eqref{new, pushforw} by the definition of $\ppr_{\rm I}$ and $\ppr_{\rm II}$. 

Let us now prove the lemma from the identity~\eqref{new, pushforw}. Indeed,  by taking $\ppr_n$ in place of $\ppr_{\rm I}$ and $f = \GG_{n+1} \oplus \left(\GG_{n+2}\circ\GG_{n+1}\right) \oplus \ldots \oplus \left(\GG_N \circ \ldots\circ\GG_{n+2}\circ\GG_{n+1}\right)$, where $\M=G$ and $\mathcal{N}=G^{N-n}$, we deduce that
\begin{align*}
    \dd\tpsi_n &= \left(\ppr_n\right)_\# \dd\psi + \left\{\left(\ppr_{n+1}\right)_\#\dd\psi\right\} \cdot \dd \GG_{n+1} +  \left\{\left(\ppr_{n+2}\right)_\#\dd\psi\right\} \cdot \dd \left(\GG_{n+2} \circ \GG_{n+1}\right)\\
    &\quad + \left\{\left(\ppr_{N}\right)_\#\dd\psi\right\} \cdot \dd \left(\GG_{N} \circ\cdots\circ\GG_{n+2} \circ \GG_{n+1}\right)\\
    &= \left(\ppr_n\right)_\# \dd\psi + \left\{\left(\ppr_{n+1}\right)_\#\dd\psi\right\} \cdot \dd \GG_{n+1} + \left\{\left(\ppr_{n+2}\right)_\#\dd\psi\right\} \cdot \dd \GG_{n+2}  \cdot \dd \GG_{n+1}\\&\quad + \left\{\left(\ppr_{N}\right)_\#\dd\psi\right\} \cdot \dd\left( \GG_{N}\circ \GG_{N-1}\circ\cdots\circ\GG_{n+2}\right) \cdot \dd\GG_{n+1}.
\end{align*}
Here we have used the fact that for $\mathcal{L}_g: G \to G$, $\mathcal{L}_g(h):=gh$, its differential $\dd \mathcal{L}_g$ is also the left-multiplication by $g$, which is given by matrix multiplication when $\lie$ is a matrix algebra. The second equality follows from the chain rule.

On the other hand, notice that the terms in the  right-most expression above possess a recursive structure too: by taking $\ppr_{n+1}$ in place of $\ppr_{\rm I}$ and $f = \GG_{n+2} \oplus \left(\GG_{n+3}\circ\GG_{n+2}\right) \oplus \ldots \oplus \left(\GG_N \circ \ldots\circ\GG_{n+2}\right)$
 in Eq.~\eqref{new, pushforw}, we have
 \begin{align*}
& \left(\ppr_{n+1}\right)_\#\dd\psi + \left\{\left(\ppr_{n+2}\right)_\#\dd\psi\right\} \cdot \dd \GG_{n+2}+ \left\{\left(\ppr_{N}\right)_\#\dd\psi\right\} \cdot \dd\left( \GG_{N}\circ \GG_{N-1}\circ\cdots\circ\GG_{n+2}\right) =\dd\tpsi_{n+1}.
 \end{align*}
 
The above computations  establish the following equality on $T^*G$:
\begin{equation*}
    \dd \tpsi_n = \left(\ppr_n\right)_\# \dd\psi + \dd\tpsi_{n+1} \cdot \dd\GG_{n+1}.
\end{equation*}
Thus, evaluating this identity at $z_n \in G$, we infer from the definition of the update function $\GG_{n+1}(z_n)=z_{n+1}$ that
\begin{equation*}
\dd_{z_n}\tpsi_{n} = \left(\ppr_n\right)_\# \dd\psi\Big|_{z_n} + \dd_{z_{n+1}}\tpsi_{n+1} \cdot \dd_{z_n}\GG_{n+1}.
\end{equation*}
We now conclude by the recurrence relation $z_{n+1} = z_{n} \cdot \exp(M_{\theta_{n+1}}(\Delta x_{n+1}))$ in Eq.~\eqref{eqn:rec_dev}.   \end{proof}

A remark on notations is in order: In the last line of the proof above, we (formally) replaced $\dd_{z_n}\GG_{n+1}$ with $\exp(M_{\theta_{n+1}}(\Delta x_{n+1}))$. The reason is that 
$$\GG_{n+1}(z_n)=z_{n+1} = z_{n} \cdot \exp(M_{\theta_{n+1}}(\Delta x_{n+1})),$$ where $\GG_{n+1}: G \to G$. Viewing $G$ as a subset of vector space $\mathfrak{gl}(m,\C)$, we may regard $\GG_{n+1}$ as a linear mapping in its argument. Thus, $\dd_{z_n}\GG_{n+1}: T_{z_n} G \to T_{z{_{n+1}}}G$ equals the \emph{right multiplication} by the matrix $\exp\big(M_{\theta_{n+1}}(\Delta x_{n+1})\big)$.

\subsection{Proof of \cref{propn: gradient computation}}
 Before proceding to the proof of \cref{propn: gradient computation}, we remark that the Riemannian metric taken here is the one given by the Hilbert--Schmidt norm of matrices in $\mathcal{M} = \mathfrak{gl}(m,\mathbb{F})$ for $\mathbb{F}=\R$ or $\C$.  Equivalently, we identify $\mathfrak{gl}(m,\mathbb{F})$ with the Euclidean space $\mathbb{F}^{m \times m}$ via the diffeomorphism $\Psi: \mathfrak{gl}(m,\mathbb{F})\to \mathbb{F}^{m \times m}$, 
$$\Psi \left(\left\{A^i_j\right\}_{1 \leq i,j \leq m}\right) := \left[A^1_1, A^1_2, \ldots, A^1_m, A^2_1, A^2_2, \ldots, A^2_m, \cdots \cdots, A^m_1,  A^m_2, \ldots, A^m_m\right]^\top.$$ The pullback metric is precisely the Hilbert--Schmidt metric $\langle B,C\rangle = {\rm tr}(BC^*)$ for any $B,C\in T_A\mathfrak{gl}(m,\mathbb{F}) \cong \mathfrak{gl}(m,\mathbb{F})$ with arbitrary $A \in \mathfrak{gl}(m,\mathbb{F})$.

The choice of Hilbert--Schmidt metric on matrix Lie groups is natural in our context. Indeed, for inclusions ${SO}(m) \subset \mathfrak{gl}(m,\R)$ and $U(m) \subset \mathfrak{gl}(m,\C)$,  pullback metrics of the Hilbert--Schmidt metric under the natural inclusions are exactly the bi-invariant Riemannian metrics on ${SO}(m)$ 
and $U(m)$, respectively.

In provision of this remark, one can compute the gradient of $\tpsi_n$ in Eq.~\eqref{new-update, psi tilde} by expressing $\na_\theta \tpsi_n$ in terms of the differential of exponential map $\exp: \lie \to G$, which in turn is computed via \cref{grad_lie_exponential} and \cref{lemma: Gradient on a matrix Lie algebra}. Recall the projection ${\bf Proj}:\mathfrak{gl}(m,\C) \to \lie$ from Eq.~\eqref{proj_map}. 

As usual, denote by $\mathcal{L}_{g_1}: G \to G$ for each $g_1 \in G$ the left regular action; \emph{i.e.}, $\mathcal{L}_{g_1}(g_2):=g_1g_2$.
\begin{proof}
The expression $$\nabla_{\theta} \left(\psi\circ \mathcal{D}_{\theta}(x)\right) = \sum_{n=1}^N \nabla_{\theta} \left(\tpsi_n \circ \GG_n(z_{n-1}, \theta)\right)$$
follows directly from the definition of $\tpsi_n$ in Eq.~\eqref{new-update, psi tilde}.

To further compute the right-hand side, notice that 
$\na_{\theta}\left(\tpsi_n \circ \GG_n(z_{n-1}, \theta)\right)$ is a vector field on the \emph{Euclidean} manifold $\lie^d$ (see Remark~\ref{remark: gradient}). Recall also that $$z_n = \GG_n(z_{n-1}, \theta) = \mathcal{L}_{z_{n-1}}\Big[\exp\big(M_{\theta}(\Delta x_n)\big)\Big],$$ where $M_{\theta}(\Delta x_n)$ takes values in $\lie = {\bf Proj}(\mathfrak{gl}(m,\C))$. Moreover, set $A:\lie^d \to \lie$ to be the mapping $A(\theta):=M_\theta(\Delta x_n)$ and put $f=\tpsi_n \circ \mathcal{L}_{z_{n-1}}$. We compute that \begin{align*}
    \na_{\theta}\left(\tpsi_n \circ \GG_n(z_{n-1}, \theta)\right) &= {\bf Proj}\left\{A^\# \na (f \circ \exp)\big(A(\theta)\big)\right\}\\
    &={\bf Proj}\bigg\{ A^\# \left\{\dd_{A(\theta)^\top} \Big[\na f (\exp \big(A(\theta)\big)\Big] \right\}\bigg\}.
\end{align*}
The second line follows from Theorem~\ref{grad_lie_exponential}, and the first line holds by the general rule --- let $\mathcal{M}$ and $\mathcal{N}$ be Riemannian manifolds and $\varphi: \mathcal{M} \to \mathcal{N}$, $h: \mathcal{N} \to \R$ be smooth functions. Then $\na (h \circ \varphi) = \varphi^\# (\na h)$ where $\varphi^\#$ is the pullback operator.

To proceed, let us compute $\na f$. Since for any $g \in G$ the left regular action $\mathcal{L}_{g}$ satisfies $\dd \mathcal{L}_g = \mathcal{L}_g$ \emph{whenever $G$ is a matrix group}, we have that $$\na f = \na_{z_n}\tpsi_n \cdot z_{n-1},$$ where $\cdot$ is the matrix multiplication. We put the subscript $z_n$ here only to emphasise that the gradient of $\tpsi_n$ is taken with respect to this variable.

It remains to compute $(A^\#V)(\theta)$, where we write $V:= \dd_{A(\theta)^\top} \Big[\na f (\exp \big(A(\theta)\big)\Big]$ for abbreviation. Recall that $V$  and $A^\#V$ are vector fields on $\lie$ and $\lie^d$, respectively. Elementary computations in differential geometry then yield that $$A^\#V(\theta)=V\left(A_\#\theta\right) = V\big(A(\theta)\big)\qquad\text{for each } \theta \in \lie^d.$$ It is crucial here that $A$ is a \emph{linear} mapping from $\lie^d$ to $\lie$; thus, the pushforward $A_\#(\theta)$ coincides with $A(\theta)$. 
We may now complete the proof.  \end{proof}

\section{Experimental details}\label{sec: Experiement_details}
\subsection{General remarks}
\textbf{Optimisers.} All experiments used the ADAM optimiser as in  \cite{kingma2014adam}. Learning rates and batch sizes vary along experiments and models. The learning rate in certain experiments may be applied with a constant exponential decay rate, and the training process is terminated when the metric failed to improve for some large number of epochs. See  individual sections for details. We saved model checkpoints after every epoch whenever the validation set performance (the performance metric varies from experiment to experiment) gets improved and loaded the best performing model for evaluation on test set. 

\textbf{Architectures.}
In our numerical experiments, we consider the development models (DEVs) and the hybrid model (LSTM+DEVs) by stacking the LSTM with the development layer together. To benchmark our development-based models, we use the signature model and the LSTM model as two baselines.   

For the Speech Commands and Character Trajectories experiments, all the above mentioned models (DEVs, LSTM+DEVs, Signature, and LSTM) all include a linear output layer to predict the estimated probability of each class. For a fair comparison, we keep the total number of parameters across different models comparable and then compare the model performance in terms of accuracy and training stability.  


Similarly, we also keep the model complexity of the baseline LSTM and our proposed hybrid model (LSTM+DEVSE(2)) comparable in the $N$-body simulation task. See the specific architecture of each individual model for those three datasets in \cref{sub:sc}, \cref{CT_implementation},  and \cref{subsec:n_body},  respectively.


\textbf{Hyperparameter tunning.}
Once our model architecture is fixed for fair comparison, hyperparameter tunning  focuses mainly on  learning rate and batch size. Specifically, the learning rate is first set to 0.003 and reduces until the good performance is achieved. Then we search the batch size with a grid of $[32,64,128]$. The decay factor of learning rate was applied to those models with severe overfitting observed on the validation set. Throughout the experiments, we notice that the hyperparameters have minimal effects on the development network model, while the RNN-based models are very sensitive to hyperparameters.

\textbf{Loss.}
We used cross-entropy loss applied to the softmax function of the output of the model for the multi-classes classification problem. We used mean squared error loss for the regression problem in the $N$-Body simulation.

\textbf{Computing infrastructure.}
All experiments were run on five Quadro RTX 8000 GPUs. We ran all models with PyTorch 1.9.1 \cite{NEURIPS2019_9015} and performed hyperparameter tunning with Wandb \cite{wandb}. 
\textbf{Codes.} The codes for reproducing all experiments are included in supplementary material. 

\subsection{Speech Commands}\label{sub:sc}

We follow the data generating procedure in \cite{kidger2020neural} and took a 70\%/15\%/15\% train/validation/test split. The batch size was 128 for every model. Dev(SO), LSTM+Dev(SO) and signature models used a constant learning rate 0.001 throughout. LSTM was trained with learning rate 0.001 and exponential decay rate  0.997. Training terminates if the validation accuracy stops improving for 50 epochs. Set maximum training epochs to 150.
 
\textbf{Signature.}
We apply the signature transform up to depth 3 on the input, which converts the input time series to a vector of size 8420 before passing to a output linear layer. The model has 84210 parameters in total.

\textbf{LSTM.} The input of the model is passed to a single layer of LSTM (135 hidden units). The output of the LSTM (last time step) is passed to a output linear layer. It has 86140 parameters in total.

\textbf{DEV(SO).}
The input of the model is passed to the special orthogonal development lyear (54 by 54 matrix hidden units). Then the output of the development (final time step) is passed to a final linear output layer. It has 87490 parameters in total.

\textbf{LSTM-DEV(SO).}
The input of the model is passed to a single LSTM layer (56 hidden units). Then we pass the output of the LSTM (full sequence) to the special orthogonal development layer (32 by 32 matrix hidden units). Then the output of the development (final time step) is passed to a final linear output layer. It has 85066 parameters in total.

\subsection{Character Trajectories}\label{CT_implementation}
We follow the approach in \cite{kidger2020neural}, in which we combined the train/test split of the original dataset and then took a 70\%/15\%/15\% train/validation/test split.

The batch size used was 32 for every model. Dev(SO), LSTM+Dev(SO) and signature model used a constant learning rate of 0.001 throughout the training. LSTM was trained with a learning rate of 0.003, with an exponential decay rate of 0.997. If the validation accuracy stops improving for 50 epochs, we terminate the training process. The maximum training epochs is set to be 150.

\textbf{Signature.}
We apply the signature transform up to depth 4 on the input, which converts the input time series to a vector of size 340, then it is passed to a linear output layer. The model has 6820 parameters in total.

\textbf{LSTM.} The input of the model is passed to a single layer of LSTM (40 hidden units). The output of the LSTM (last time step) is passed to a linear output layer. The model has 8180 parameters in total.

\textbf{ExpRNN.} The input of the model is passed to a single layer of ExpRNN (78 hidden units). The output of the ExpRNN (last time step) is passed to a linear output layer. The model has 8054 parameters in total.

\textbf{DEV(SO).}
The model's input is passed to the special orthogonal development layer (20 by 20 matrix hidden units). Then the output of the development (final time step) is passed to a final linear output layer. The model has 7760 parameters in total.

\textbf{LSTM-DEV(SO).}
The model's input is passed to a single LSTM layer(14 hidden units). Then we pass the output of the LSTM (full sequence) to the special orthogonal development layer (14 by 14 matrix hidden units). Then the output of the development (final time step) is passed to a final linear output layer. The model has 8084 parameters in total.
\subsection{Sequential MNIST and CIFAR10}\label{subsec:seq_image}
The MNIST dataset is a large collection of handwritten digits, with a training set of 60,000 examples and a test set of 10,000 examples. Each $28 \times 28$ pixel valued image is flattened to a sequence of length 784. The permuted task (p-MNIST) applies the same random permutation on the sequence of size 784. The CIFAR10 dataset consists of 60000 (50000 training  $+$ 10000 test samples) 32 by 32 colour images with 10 classes. Like the sequential MNIST, the CIFAR10 task flattens the images to pixel by pixel sequences of length 1024.

The batch size used was 128 for both sequential MNIST and CIFAR10. LSTM+DEV(SO) was trained with an inital learning rate of 0.002 and an exponential decay rate of 0.997. We terminate the training process if the validation accuracy stops improving for 50 epochs. The maximum training epochs is set to be 200.

\textbf{LSTM+DEV(SO).}
A common model architecture  is used for both datasets. The input is passed to a single LSTM layer (120 hidden units); the output of the LSTM (full sequence) is passed to the special orthogonal development layer (10 by 10 matrix hidden units). Then, the output of the development (final time step) is passed to a final linear output layer. The model has 72050 and 73010 parameters for sequential MNIST and CIFAR10, respectively.
\subsection{Brownian motion on the unit 2-sphere $\mathbb{S}^2$}\label{subsec:BM_s2}
We simulated 20000 samples of discretised Brownian motion on $\mathbb{S}^2$ with time length $L=500$ and equal time spacing $\Delta t =2e^{-3}$ by the random walk approach (\cite{novikov2020random}). 

The discretised Brownian motion $B=(B_{t_n})_{n = 0}^{L-1}$ with $t_n = n \Delta t $ on $\mathbb{S}^2$ is simulated by a driving random walk $X =(X_{t_n})_{n = 0}^{L-1}$ on $\mathbb{R}^2$. Let $B_{t_0} = (0,0,1)^\top$. We first simulate a 2D random walk $X$ by i.i.d. increments $\Delta X_n:= X_{n+1}-X_{n}$, uniformly distributed $\sim U(-0.5, 0.5)$. Then rescale $X$ by the factor $\sqrt{12 \Delta t}$ to approximate $\mathbb{S}^{2}$-valued Brownian motion over the time interval $\Delta t$. For a generic point $b = \left(b^{(1)}, b^{(2)}, b^{(3)}\right)^\top \in \mathbb{S}^{2}$ we use the following basis for $T_b\mathbb{S}^{2} \subset \R^3$, the tangent plane of $\mathbb{S}^{2}$ at $b$:
\begin{eqnarray*}
 e_b^1 = \frac{1}{\sqrt{(b^{(1)})^2+(b^{(3)})^2}}\begin{pmatrix}
b^3\\
0\\
-b^1\\
\end{pmatrix}, \qquad     e^2_b = \frac{1}{\sqrt{(b^{(1)})^2+(b^{(3)})^2}}\begin{pmatrix}
-b^{(1)}b^{(2)}\\
-\left[\left(b^{(1)}\right)^2+\left(b^{(3)}\right)^2\right]\\
b^{(2)}b^{(3)}\\
\end{pmatrix}.
\end{eqnarray*}
At each step $n \in \{1, \cdots, L-1\}$, we perform standard random walk on the tangent plane orthogonal to the orientation vector of $B_{t_n}$ and project $B_{t_{n}} + \sqrt{12 \Delta t}\Delta X_n$  to $B_{t_n}$ as described in \cite{novikov2020random}. In this manner we simulate, by  Monte-Carlo,  independent samples of the pair of driving random walk and corresponding Brownian motion on $\mathbb{S}^2$. 

The simulated input/output trajectories are split into train/validation/test with ratio 80\%/10\%/10\%. All models were trained with learning rate 0.003 and exponential decay rate  0.998. The batch size was 64 for every model. Training was terminated if the validation MSE stopped lowering for 100 epochs. The maximum number of epochs was set to be 300. 

For all models, we passed the input through two dense layers (32 hidden units), followed by a single
LSTM/orthogonalRNN layer (64 hidden units).

\textbf{LSTM.} For the LSTM model, we pass the LSTM output (full sequence) through one dense layer (64 hidden units) and a single linear output layer.

\textbf{ExpRNN.} For the ExpRNN model, we pass the ExpRNN output (full sequence) through one dense layer (64 hidden units) and a single linear output layer.

\textbf{LSTM+DEV(SO(3)).} we pass the LSTM output (full sequence) to a SO(3) development layers (each has 3 by 3 matrix hidden units).  The output (full sequence) of the development layer is a sequence Lie group element in SO(3). Then we take the last column of the SO(3) matrix as the final sequential output.

\subsection{$N$-body simulations}\label{subsec:n_body}
We followed \cite{kipf2018neural} to simulate 2-dimensional trajectories of the five charged, interacting particles. The particles carry positive and negative charges, sampled with uniform probabilities, interacting according to the relative location and charges. We simulated 1000 training trajectories, 500 validation trajectories and 1000 test trajectories, each having 5000 time steps. Instead of inferring the dynamics of the complete trajectories as in \cite{kipf2018neural}, we consider it a regression problem with sequential input: the input data is a sequence of locations and velocities in the past 500 time steps (downsampled to the length of 50). The output is the particle's positions in the future $k$ time steps for $ k \in \{100, 300, 500\}$.

More specifically, let $x_{t}^{(i)}$ and $ v_{t}^{(i)}$ be the 2-D location and velocity of the $i^{\text{th}}$ particle at time $t$. The input and output of the regression problem are $\mathbf{x}_t = \left(\left((x_{s}^{(i)}, v_{s}^{(i)})\right)_{s = t-500}^{t}\right)_{i = 1}^{5}$ and $\left(x_{t+k}^{(i)}\right)_{i = 1}^{5}$, respectively. In 2-D the coordinate transform is described by the special Euclidean group $SE(2)$, which consists of  rotations and translations:
\begin{equation}\label{se_transformation}
    \begin{pmatrix}
  x^\prime \\
  y^\prime \\
  1 \\
\end{pmatrix}= T    \begin{pmatrix}
  x \\
  y \\
  1 
\end{pmatrix}
:= \begin{pmatrix}
  R &\begin{matrix}
    t_x\\
    t_y\\
  \end{matrix}\\
  \begin{matrix}
  0&0
  \end{matrix}&
  1
\end{pmatrix}\begin{pmatrix}
  x \\
  y \\
  1 \\
\end{pmatrix},
\end{equation}
where $T\in {\rm SE}(2)$ and $R$ is a 2-D rotation matrix.


The proposed LSTM+DEV(SE(2)) network architecture models the  map  $x^{(i)}_{t} \mapsto x^{(i)}_{t+k}$, hence predicting the future location by multiplying the output of the development layer with $x^{(i)}_{t}$. The model weights change over development layers for  different particles.

The other three baselines include (1) the current location $x_{t}$ (static estimator, assuming no further movements of the particle); (2) the LSTM; and (3) the SO(2) development layer. 

All models are trained with a learning rate of 0.001 with a 0.997 exponential decay rate. The batch size used was 128 for every model. The training was terminated if the validation MSE stopped lowering for 50 epochs. The maximum number of epochs was set to be 200. 

For all models, we passed the input through two dense layers (16
hidden units), followed by a single LSTM layer (32 hidden units). 

\textbf{LSTM.} For the LSTM model, we pass the LSTM output (last time step) through one dense layer (32 hidden units) and a single linear output layer.

\textbf{LSTM+DEV(SE(2)).} We pass the LSTM output (full sequence) to the five independent SE(2) development layers (each has 3 by 3 matrix hidden units).  The output of each independent development layer (final time step) is a Lie group element in SE(2). We apply the learned SE(2) element to the last observed location of each particle to estimate the future locations of the five particles, according to Equation \eqref{se_transformation}.

\textbf{LSTM+DEV(SO(2)).} We pass the LSTM output (full sequence) to the five independent SO(2) development layers (each has 2 by 2 matrix hidden units). The output of each independent development layer (final time step) is a Lie group element in SO(2). We multiply the learned SO(2) element with the last observed location of each particle to estimate the future locations of the five particles.

\newpage
\section{Supplementary numerical results}
Here we report the supplementary numerical results on speech Commands dataset (\cref{Tab:SC_sig_dev_comp}).
\begin{table}[h!]
\caption{Test accuracies of the linear model on development and signature baselines on Speech Commands dataset. }\label{Tab:SC_sig_dev_comp}
\label{SC_tab1}
\begin{center}
\begin{small}
\begin{sc}
\resizebox{0.4\columnwidth}{!}{
\begin{tabular}{|c|cl|}
\hline
 \multicolumn{3}{|c|}{Signature}\\
\hline
depth ($n$)& Test Accuracy & \# Feature   \\
\hline
 1     &12.3\% &20   \\
 2 &75.4\% &420 \\
3   &85.7\% &8420  \\
4    &88.9\% &168420         \\
\hline
\multicolumn{3}{|c|}{Development (SO)}\\
\hline
order ($m$)&Test Accuracy & \# Feature\\
\hline
 5      & 70.5\% & 25    \\
10     & 81.3\%&100 \\
20      & 84.6\%& 400 \\
 30      & 86.2\%&    900      \\
50   & 87.5\%& 2500 \\
100   &89.0 \%& 10000 \\
\hline
\multicolumn{3}{|c|}{Development (Sp)}\\
\hline
order ($m$)&Test Accuracy & \# Feature\\
\hline
 6      & 79.1\% & 36    \\
10     & 83.7\%&100 \\
20      & 85.5\%& 400 \\
 30      &87.1\%&    900      \\
50   & 88.5\%& 2500 \\
100   & 86.3\%& 10000 \\
\bottomrule
\end{tabular}}
\end{sc}
\end{small}
\end{center}
\end{table}

\end{appendix}

\end{document}